\newcommand{\rob}{\gamma}  % robustness
\newcommand{\cons}{\beta} % consistency
\newcommand{\Dk}{\Delta k}
\title{Optimal Robustness-Consistency Trade-offs for Learning-Augmented Online Algorithms }
\author{%
 Alexander Wei\\
  UC Berkeley\\
  \texttt{awei@berkeley.edu} \and Fred Zhang \\   UC Berkeley \\
  \texttt{z0@berkeley.edu} 
}
\date{October 21, 2020}
\begin{document}

\maketitle

\begin{abstract}
    We study the problem of improving the performance of online algorithms by incorporating machine-learned predictions. The goal is to design algorithms that are both \textit{consistent} and \textit{robust}, meaning that the algorithm performs well when predictions are accurate \textit{and} maintains worst-case guarantees. Such algorithms have been studied in a recent line of work initiated by Lykouris and Vassilvitskii (ICML '18)  and Kumar, Purohit and Svitkina~(NeurIPS '18). They provide robustness-consistency trade-offs for a variety of online problems. However, they leave open the question of whether these trade-offs are tight, {i.e.}, to what extent to such trade-offs are necessary.
    In this paper, we provide the first set of non-trivial lower bounds  for competitive analysis using machine-learned predictions. We focus on the classic problems of ski rental and non-clairvoyant scheduling and provide optimal trade-offs in various settings.
\end{abstract}
\newpage
\section{Introduction}
The vast gains in predictive ability by machine learning models in recent years have made them an attractive approach for algorithmic problems under uncertainty: One can train a model to predict outcomes on historical data and then respond according to the model's predictions in future scenarios. For example, when renting cloud servers, a company might need to decide whether to pay on-demand or reserve cloud servers for an entire year. The company could try to optimize their purchasing based on a model learned from past demand. However, a central concern for applications like these is the lack of provable bounds on worst-case performance. Modern machine learning models may produce predictions that are embarrassingly inaccurate (e.g., \cite{intriguing}), especially when trying to generalize to unfamiliar inputs. The potential for such non-robust behavior is be problematic in practice, when users of machine learning-based systems desire at least some baseline level of performance in the worst case.

On the other hand, the algorithms literature has long studied algorithms with worst-case guarantees. In particular, the theory of \emph{online algorithms} focuses on algorithms that perform well under uncertainty, even when inputs are chosen adversarially. A key metric in this literature is the \emph{competitive ratio}, which is the ratio between the worst-case performance of an algorithm (without knowledge of the future) and that of an offline optimal algorithm (that has full knowledge of the future).\footnote{For randomized algorithms, one considers the \emph{expected} competitive ratio, which compares the expected cost (taken over the randomness of the algorithm) to the cost of the offline optimal.} That is, an algorithm with a competitive ratio of $\mathcal C$ does at most $\mathcal C$ times worse than any other algorithm, even in hindsight. The classical study of online algorithms, however, focuses on the worst-case outcome over all possible inputs. This approach can be far too pessimistic for many real-world settings, leaving room for improvement in more optimistic scenarios where the algorithm designer has some prior signal about future inputs.

Recent works by Lykouris and Vassilvitskii \cite{DBLP:conf/icml/LykourisV18} and Kumar, Purohit and Svitkina~\cite{NIPS2018_8174} intertwine these two approaches to algorithms under uncertainty by {augmenting}   algorithms with machine learned predictions. The former studies the online caching problem, whereas the latter focuses on the classical   problems of ski-rental and non-clairvoyant job scheduling. They design algorithms that (1) perform excellently when the prediction is accurate and (2) have worst-case guarantees in the form of competitive ratios. For such augmented online algorithms, they introduce the metrics of \emph{consistency}, which measures the competitive ratio in the case where the machine learning prediction is perfectly accurate, and \emph{robustness}, which is the worst-case competitive ratio over all possible inputs. Moreover, the algorithms they design have a natural \emph{consistency-robustness trade-off}, where one can improve consistency at the cost of robustness and {vice versa}. These two works, however, do not discuss the extent to which such trade-offs are necessary, \ie, whether the given trade-offs are tight.\footnote{
We remark that Lykouris and Vassilvitski \cite{DBLP:conf/icml/LykourisV18} prove for the online caching problem that one can achieve constant competitiveness under perfect predictions while having $O(\log k)$ competitiveness in the worst case. This bound is optimal up to constant factors by the classical algorithms for online caching \cite{marker}.}

 In this work, we provide the first set of optimal results for online algorithms using machine-learned predictions. Our results are the following:
 \begin{enumerate}[(i)]
     \item 
 For the ski-rental problem, we give tight lower bounds on the robustness-consistency trade-off in both deterministic and randomized settings, matching the guarantees of the algorithms given by ~\cite{NIPS2018_8174}. 
 \item For the non-clairvoyant job scheduling problem,
 we provide a non-trivial lower bound that is tight at the endpoints of the trade-off curve. 
 
 Moreover, for the case of two jobs, we give matching upper and lower bounds on the full trade-off. The algorithm improves significantly upon that of \cite{NIPS2018_8174}.
 \end{enumerate}
Conceptually, our results show that merely demanding good performance under perfect prediction can require substantial sacrifices in overall robustness. That is, this trade-off between good performance in the ideal setting and overall robustness is deeply intrinsic to the design of learning-augmented online algorithms.

\subsection{Our results}
\begin{wrapfigure}{r}{0.4\textwidth}
  \begin{center}
    \includegraphics[width=0.38\textwidth]{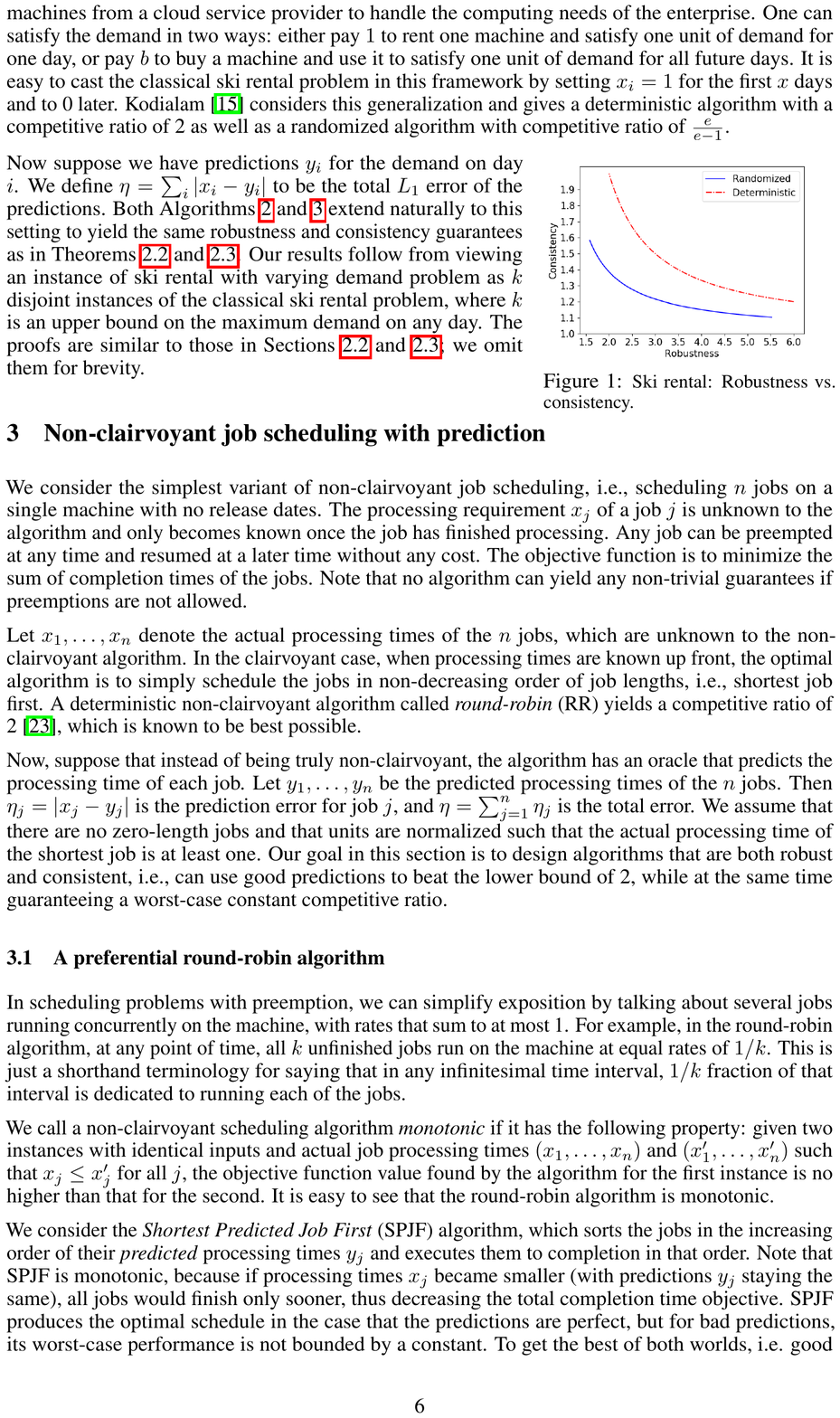}
  \end{center}
  \caption{Tight deterministic and randomized trade-offs for learning-augmented ski-rental.}
\end{wrapfigure}
\paragraph{Ski-rental.}
The \emph{ski rental problem} is a classical online algorithms problem \cite{Karlin1988} with a particularly simple model of decision-making under uncertainty. In the problem, there is a skier who is out to ski for an \emph{unknown} number of days. The first morning, the skier must either rent skis for a cost of \$$1$ or buy skis for a cost of \$$B$. Each day thereafter, the skier must make the same decision again as long as she has not yet purchased skis. The goal for the skier is to follow a procedure that minimizes competitive ratio. Variations of the ski-rental problem have been used to model a diverse set of scenarios, including snoopy caching~\cite{Karlin1988}, dynamic TCP acknowledgement~\cite{karlin2001dynamic}, and renting cloud servers \cite{khanafer2013constrained}.

In our setting of ski-rental with a machine-learned prediction, we assume that, in addition to knowing $B$, the skier has access to a prediction $y$ for the number of days she will ski. Let $\eta$ denote the absolute error of the prediction $y$ (i.e., if she actually skis for $x$ days, then $\eta = |x - y|$). Furthermore, define $c(\eta)$ to be the skier's worst-case competitive ratio over all $y$ given $\eta$. We say that the procedure is \emph{$\rob$-robust} if $c(\eta) \leq \rob$ for any $\eta$ and that it is \emph{$\beta$-consistent} if $c(0) \leq \beta$. We prove deterministic and randomized lower bounds on the robustness-consistency trade-off that match the algorithmic results in~\cite{NIPS2018_8174}. Specifically, we show:

\begin{theorem}[Deterministic Lower Bound for Ski-Rental; also in \cite{gollapudi2019online,recent}]\label{thm:det-lo}
Let $\lambda \in (0,1)$ be a fixed parameter. Any $(1+\lambda)$-consistent deterministic algorithm  for   ski-rental with machine-learned prediction problem is at least $(1+1/\lambda)$-robust.
\end{theorem}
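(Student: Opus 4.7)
The plan is to adversarially construct an input that forces any $(1+\lambda)$-consistent deterministic algorithm to suffer a competitive ratio arbitrarily close to $1 + 1/\lambda$, by first pinning down the structure of any such algorithm via the consistency constraint and then hitting it with a mismatched true input. Throughout, I will take $B$ to be large; the lower bound is asymptotic in $B$, as is standard for ski-rental.

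First, I reduce the algorithm to a single parameter. For ski-rental, a deterministic algorithm with prediction $y$ is equivalent to a map $y \mapsto d(y) \in \{1,2,\ldots\} \cup \{\infty\}$, where $d(y)$ is the day on which it buys (with $\infty$ meaning never). On true length $x$, the algorithm's cost is $x$ if $x < d(y)$ and $(d(y)-1)+B$ otherwise, while the offline optimum is $\min(x,B)$.

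Next, I extract a constraint from consistency. Take any integer $y$ with $y > (1+\lambda)B$ and set $x = y$, so that $\mathrm{OPT}=B$. If $d(y)>y$, then the algorithm rents on all $y$ days and pays $y$, giving ratio $y/B > 1+\lambda$ and violating consistency; hence $d(y)\leq y$. Then the algorithm pays $(d(y)-1)+B$, and $(1+\lambda)$-consistency yields
\[
d(y) \leq \lambda B + 1.
\]
The adversary now switches the true input to $x := d(y)$. Since $d(y) \leq \lambda B + 1 \leq B$ once $B$ is large enough (e.g.\ $B \geq 1/(1-\lambda)$), the optimum is $\min(d(y),B)=d(y)$, while the algorithm still pays $(d(y)-1)+B$ because $x \geq d(y)$. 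The competitive ratio is therefore
\[
\frac{d(y)-1+B}{d(y)} \;=\; 1+\frac{B-1}{d(y)} \;\geq\; 1+\frac{B-1}{\lambda B + 1},
\]
which tends to $1+1/\lambda$ as $B\to\infty$.

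The only subtlety is handling the \emph{never-buy} case $d(y)=\infty$ (or $d(y)>y$), which is precisely why I pick $y$ strictly larger than $(1+\lambda)B$: that choice rules out renting forever as a consistent strategy and forces $d(y)$ to be finite and bounded by $\lambda B + 1$. Once this is settled, the argument is a one-shot attack: use consistency at a single large prediction to localize the algorithm's committed purchase day, then set the true input equal to that day so the algorithm buys on its very last day of skiing, maximally wasting the purchase.
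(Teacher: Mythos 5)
Your proposal is correct and follows essentially the same approach as the paper's proof: fix a prediction $y > (1+\lambda)B$, use the consistency constraint at $x=y$ to force the purchase day $d(y)$ to satisfy $d(y)\le \lambda B + 1$, then adversarially set $x = d(y)$ to obtain a competitive ratio of at least $1 + (B-1)/(\lambda B + 1) \to 1 + 1/\lambda$. Your version merely streamlines the paper's two-case elimination ($t\ge y$ and $B<t<y$) into a single step showing $d(y)\le y$ and then directly deriving $d(y)\le\lambda B+1$.
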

We remark that this deterministic bound is simple to prove and has also appeared in two prior works~\cite{gollapudi2019online,recent}. 

\begin{theorem}[Randomized Lower Bound for Ski-Rental]\label{thm:rand-low}
  Any  (randomized) algorithm  for   ski-rental with machine-learned prediction that achieves robustness $\rob$ must have consistency
  \[ \cons\ge\rob\log\left(1 + \frac{1}{\rob - 1}\right). \]
In particular, any (randomized) algorithm achieving robustness $\rob\le 1/(1 - e^{-\lambda})$ for the ski-rental with machine-learned prediction problem must have consistency $\beta\ge \lambda / (1 - e^{-\lambda})$.
\end{theorem}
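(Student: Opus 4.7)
My plan is to fix a worst-case prediction $y=B$ and directly analyze the distribution that any randomized algorithm induces on the day it buys skis. Writing $\tau$ for the (random) purchase day and working in the continuum limit $B \to \infty$, let $p(t)$ denote the density of $\tau/B \in [0,1]$; buying after day $B$ is dominated by buying on day $B$ in the range of actual inputs I need, so this is WLOG. A direct calculation then gives that the expected cost, normalized by $B$, when the actual number of ski days is $sB$ (for $s\in(0,1]$) equals
\[
C(s)=\int_0^s p(t)(t+1)\,dt+s\int_s^1 p(t)\,dt.
\]
The robustness hypothesis becomes $C(s)\le \rob\cdot s$ for all $s\in(0,1]$, and consistency (the $x=y=B$ case) becomes $C(1)\le \cons$; the task is to lower bound $C(1)$ over all $p$ meeting the robustness constraint.

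I would next change variables to the survival function $R(s)=\int_s^1 p(t)\,dt$ and its antiderivative $F(s)=\int_0^s R(u)\,du$. Integration by parts gives the clean identity $C(s)=1-R(s)+F(s)$, and since $F'=R$ the robustness condition becomes the linear differential inequality
\[
F'(s)\ge F(s)+1-\rob\cdot s,\qquad F(0)=0,
\]
while consistency asks for $F(1)\ge \cons-1$ (using $R(1)=0$).

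The crux is a Gr\"{o}nwall-style comparison. Multiplying through by the integrating factor $e^{-s}$ shows that every feasible $F$ dominates the solution $F_*(s)=(1-\rob)(e^s-1)+\rob\cdot s$ of the corresponding ODE equality. The function $F_*$ increases exactly on $[0,s^*]$ with $s^*=\log(\rob/(\rob-1))$, reaching $F_*(s^*)=\rob\cdot s^*-1$; beyond $s^*$ feasible $F$ can no longer be bounded below by $F_*$ (which has begun to decrease), but it still satisfies $F(s)\ge F(s^*)\ge F_*(s^*)$ because $F'=R\ge 0$. Evaluating at $s=1$ (noting $s^*\le 1$ whenever $\rob\ge e/(e-1)$, which is forced by the classical randomized lower bound for ski-rental) then yields $F(1)\ge \rob\log(\rob/(\rob-1))-1$, hence
\[
\cons\ge 1+F(1)\ge \rob\log\bigl(1+1/(\rob-1)\bigr),
\]
which is the first inequality of the theorem. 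The ``in particular'' clause follows by substituting $\rob=1/(1-e^{-\lambda})$ and simplifying $1+1/(\rob-1)=e^{\lambda}$.

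The main obstacle, I expect, is correctly stitching together the two regimes $s\le s^*$ and $s>s^*$ in the lower bound for $F(1)$: the differential-inequality bound $F_*$ stops being useful once it peaks, and one needs the separate non-negativity of $F'=R$ to carry a local lower bound at $s^*$ up to $s=1$. A secondary technical point is justifying the continuum relaxation; this can be handled either by running the identical argument with a discrete difference inequality in the integer setting, or by an explicit $B\to\infty$ limit together with a compactness argument on densities.
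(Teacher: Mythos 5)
Your core analytical machinery is elegant and essentially matches the paper's computation in a cleaner form: passing to the survival function $R$, its antiderivative $F$, the identity $C(s)=1-R(s)+F(s)$, the differential inequality $F'\ge F+1-\rob s$, the integrating-factor solution $F_*(s)=(1-\rob)(e^s-1)+\rob s$, the peak at $s^*=\log(\rob/(\rob-1))$, and the monotonicity argument to carry $F(s^*)$ up to $F(1)$ all check out. However, the \emph{setup} has a genuine gap. You fix the prediction at $y=B$ and assert ``WLOG'' that the purchase day lies in $[0,B]$, claiming buying later is dominated by buying on day $B$ ``in the range of actual inputs I need.'' This claim is backwards, and without it the lower bound is simply false for the $y=B$ sub-problem. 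Concretely: the algorithm that, when handed $y=B$, deterministically buys on day $B+1$ has $\ALG/\OPT=1$ at $x=y=B$ (it just rents for $B$ days), $\ALG/\OPT=2B/B=2$ at $x=B+1$, and ratio at most $2$ everywhere else. So the $y=B$ LP is feasible with $\cons=1$ and $\rob=2$, while your theorem would force $\cons\ge 2\ln 2\approx 1.39$. The culprit is that for $y=x=B$, placing mass at day $B+1$ costs $B$, whereas placing it at day $B$ costs $2B-1$; pushing mass past $B$ therefore \emph{improves} consistency without affecting any robustness constraint with $x\le B$, so your restriction to $\tau\le B$ strengthens the algorithm's task — exactly the wrong direction for a lower bound.

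The paper fixes this by choosing $y\ge 2B-1$ instead, and that choice is not cosmetic: the slack between $B$ and $y$ is what makes a genuine mass-moving argument possible. In Lemma~\ref{lemma:y+1} all mass beyond $y+1$ is first pushed down to $p_{y+1}$ (same coefficient $y$ in the consistency constraint, same coefficient $x$ in every $\mathcal{C}(x)$ with $x<B$, and the intermediate constraints $\mathcal{C}(B),\dots,\mathcal{C}(y)$ are dominated by consistency since $\cons\le\rob$), and then $p_{y+1}$ is moved to $p_{y-B+1}$, which has the \emph{same} coefficient $y$ in the consistency constraint and the same coefficient $B$ in every $\mathcal{C}(x)$, $x<B$, precisely because $y-B+1\ge B$. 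Together with Lemma~\ref{lemma:tight} and the observation that the support size $k$ satisfies $k\le B$ once $\rob\ge e/(e-1)$, this legitimately reduces to a distribution supported on $[1,B]$, after which either the paper's explicit computation or your ODE argument gives the bound. To repair your proof you would need to (i) take $y\ge 2B-1$, so the consistency cost is no longer $C(1)$ but $\sum_{i\le y}(B+i-1)p_i + y\sum_{i>y}p_i$, and (ii) prove the support reduction rather than assert it; once both are in place, your $F$-based argument applies verbatim and is a nice alternative to the paper's direct summation.
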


\begin{wrapfigure}{r}{0.4\textwidth}
  \begin{center}
    \includegraphics[width=0.38\textwidth]{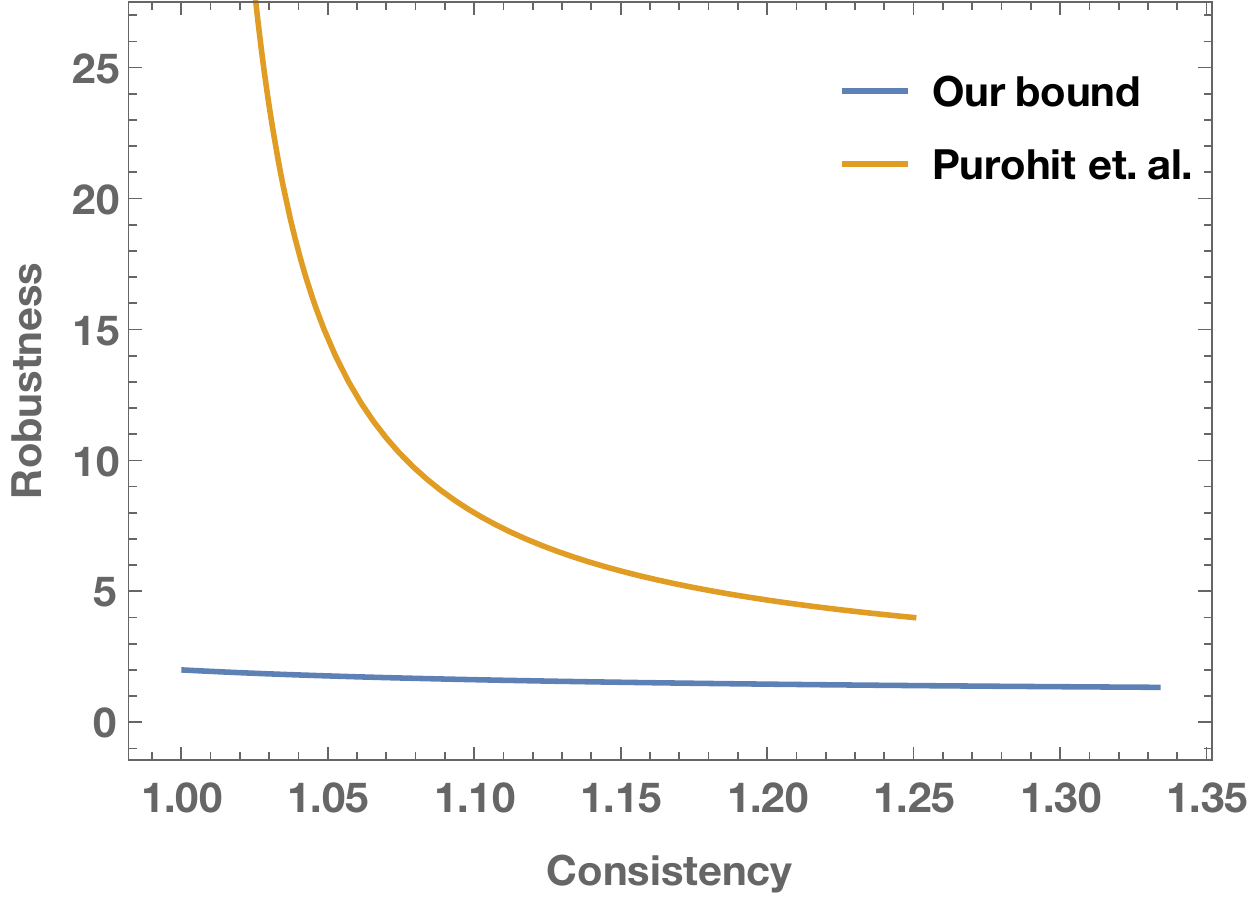}
  \end{center}
  \caption{Tight trade-offs for scheduling two jobs}\label{fig:sc}
\end{wrapfigure}
\paragraph{Non-clairvoyant scheduling.}
The \emph{non-clairvoyant job scheduling problem} was first studied in an online setting by Motwani, Phillips, and Torng \cite{MOTWANI199417}. This problem models scheduling jobs on a single processor, where the jobs have unknown processing times and the objective is to minimize the completion time (\ie, the sum of the job completion times). More formally, the algorithm initially receives $n$ job requests with \textit{unknown} processing times $x_1,x_2,\cdots, x_n$ and is asked to schedule them on a single machine, allowing for preemptions. If the completion time of job $i$ is $t_i$, then the total \emph{completion time} of the algorithm is $\sum_{i=1}^n t_i$.

In the learning-augmented version of the problem, we additionally provide the algorithm with predictions $y_1,y_2,\cdots, y_n$ of the processing times $x_1,x_2, \cdots, x_n$. Let $\eta = \sum_i |x_i - y_i|$ be the $\ell_1$ error of the prediction and $c(\eta)$ be the algorithm's worst-case competitive ratio  given $\eta$. As before, we say an algorithm is  {$\rob$-robust} if $c(\eta) \leq \rob$ for any $\eta$ and  {$\beta$-consistent} if $c(0) \leq \beta$.
Our first result is a lower bound on the robustness-consistency trade-off  in the general case. 
\begin{theorem}[Lower bound for non-clairvoyant scheduling with $n$ jobs]\label{thm:njob}
Any $(1+\lambda)$-consistent algorithm   for  non-clairvoyant scheduling   with machine-learned prediction    must have robustness
\begin{align*}
    \gamma \geq \frac{n + n(n+1)\lambda}{1 + \lambda{(n+1)(n+2)}/{2}}.
\end{align*}
\end{theorem}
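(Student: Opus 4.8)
The plan is to pit a single perfectly‑predicted instance against a family of ``partial corruptions'' of it that share the same prediction vector, and then to combine the consistency guarantee on the former with the robustness guarantees on the latter by taking a suitable nonnegative combination of the resulting inequalities (an LP‑duality argument over the family).

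Concretely, take all predictions equal: $y_1 = \cdots = y_n = 1$. Let $I_0$ be the instance with true lengths $x_i = 1$ for all $i$, so $\eta = 0$ and $\mathrm{OPT}(I_0) = n(n+1)/2$ (any schedule of $n$ unit jobs has total completion time at least $n(n+1)/2$); hence a $(1+\lambda)$‑consistent algorithm schedules $I_0$ with ``slack'' over the optimum at most $\lambda\,n(n+1)/2$. Fix such an algorithm and examine its run on $I_0$. Let $\sigma_i$ be the first time job $i$ is given positive processing rate, and relabel the jobs so that $\sigma_1 \le \sigma_2 \le \cdots \le \sigma_n$ (so $\sigma_1 = 0$); the slack constraint forces a strictly sequential schedule when $\lambda = 0$ and, in general, keeps the $\sigma_i$ small on average. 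For $k = 1,\dots,n$ define the corrupted instance $I_k$: the $k$ earliest‑touched jobs $1,\dots,k$ keep length $1$, while the $n-k$ latest‑touched jobs $k+1,\dots,n$ are shrunk to length $\epsilon$, with $\epsilon \to 0$. Then $\mathrm{OPT}(I_k) \to k(k+1)/2$ (run the $\epsilon$‑jobs first at negligible cost, then the $k$ unit jobs), which is far below $\mathrm{OPT}(I_0)$ when $k \ll n$.

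The heart of the argument is a lower bound on the algorithm's cost on $I_k$. Being non‑clairvoyant, the algorithm behaves identically on $I_k$ and on $I_0$ up to the first instant $D_k = \sigma_{k+1}$ at which it touches a shrunk job; before $D_k$ all $D_k$ units of work it has done lie on the retained jobs $1,\dots,k$. At $D_k$ the touched shrunk job completes (it did not on $I_0$) and the algorithm may henceforth do anything, but no matter what: each of the $n-k$ shrunk jobs completes no earlier than $D_k$, contributing at least $(n-k)\sigma_{k+1}$ to the total; and the retained jobs, whose work profile at time $\sigma_{k+1}$ is whatever the $I_0$‑run produced (total $\sigma_{k+1}$, some possibly already complete), cannot be completed more cheaply than by an omniscient scheduler, contributing a further quantity that is a fixed linear function of that profile. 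Robustness thus gives, for every $k$, $\gamma\,\mathrm{OPT}(I_k) \ge \mathrm{cost}_{\mathrm{alg}}(I_k) \ge (n-k)\sigma_{k+1} + (\text{linear in the }I_0\text{-schedule})$, while consistency bounds $\sum(\text{completion times on }I_0) \le (1+\lambda)\,n(n+1)/2$.

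Finally I would take an appropriate nonnegative combination of these $n+1$ inequalities. The schedule‑dependent quantities on the right‑hand sides --- the first‑touch times $\sigma_i$ and the retained‑job work profiles --- are precisely what the algorithm optimizes over, so I would choose the combination weights to be the dual‑optimal ones that cancel these quantities, leaving $\gamma \ge \big(n + n(n+1)\lambda\big)\big/\big(1 + \lambda(n+1)(n+2)/2\big)$; the endpoint $\lambda \to 0$ is a useful check, since there $k=1$ already forces the bound to be exactly $n$ (the algorithm must finish job $1$ at time $1$ and then pay $\approx 1$ for each of the $n-1$ shrunk jobs, while $\mathrm{OPT}(I_1) \to 1$). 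Taking the set of shrunk jobs from a symmetric distribution rather than adversarially yields the bound for randomized algorithms as well. I expect the genuinely delicate step to be the cost lower bound on each $I_k$ carried out rigorously --- the algorithm is entirely unconstrained after the divergence time $D_k$, some retained jobs may already have completed by then, and the corruption family must be arranged so that however the algorithm distributes its $\lambda$ worth of slack over $I_0$ the resulting linear program is tight and returns exactly the stated expression rather than a weaker one.
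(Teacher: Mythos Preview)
There is a genuine gap, and it lies not in the ``delicate step'' you flag but earlier, in the choice of pivot quantity. First-touch times $\sigma_i$ are the wrong invariant. For any $\lambda>0$, the algorithm may open with an infinitesimal round-robin probe (rate $1/n$ on every job for time $\tau$, then proceed sequentially); this costs only $O(\tau)$ extra on $I_0$ and hence preserves $(1+\lambda)$-consistency for all sufficiently small $\tau>0$, yet makes $\sigma_1=\cdots=\sigma_n=0$. On your corrupted instances $I_k$ (with $\epsilon\to 0$, in particular $\epsilon<\tau/n$), every $\epsilon$-job completes during the probe, after which the $k$ retained unit jobs can be finished essentially optimally, so $\mathrm{ALG}(I_k)/\mathrm{OPT}(I_k)\to 1$ and no useful robustness constraint survives. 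The underlying problem is that your family is distinguishable from $I_0$ at vanishing cost, which is precisely what a non-clairvoyant algorithm can exploit. (Relatedly, your claim that consistency ``keeps the $\sigma_i$ small on average'' is backwards: for the argument to bite you need the $\sigma_i$ to stay \emph{large}, near their sequential values $0,1,\ldots,n-1$, and consistency does not enforce this once $\lambda>0$.)

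The paper pivots on a different quantity. It runs the algorithm on the all-ones instance until the \emph{first job completes}, records for each $i$ the processing $d(i,i)$ received by that moment (reindexed so $d(2,2)\ge d(3,3)\ge\cdots$), and builds a \emph{single} adversarial instance with $x_i=d(i,i)+\epsilon$. Because these lengths track whatever processing the algorithm has actually done, the instance is indistinguishable from $I_0$ right up to the first completion time regardless of any probing, and every job then completes within $O(\epsilon)$ of that time, giving $\mathrm{ALG}\ge n\bigl(1+\sum_{i\ge 2}x_i\bigr)$. Consistency meanwhile forces $\sum_{i\ge 2}(i-1)\,d(i,i)\le \lambda\,n(n+1)/2$, and maximizing $\mathrm{OPT}/\mathrm{ALG}$ subject to this linear constraint (an LP whose optimum sets all the $d(i,i)$ equal) yields the stated bound directly---no dual combination over a family of instances is needed. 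If you want to salvage your outline, replace $\sigma_i$ by $d(i,i)$ and set the shrunk lengths to $d(i,i)+\epsilon$ rather than to a fixed $\epsilon$.
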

This bound is tight at the endpoints of the trade-off. When $\lambda = 0$, we have $c(\eta)\ge n$, which is achieved by any (non-idling) algorithm. On the other hand, when $\lambda = 1 - \smash{\tfrac{2}{n+1}}$ (so $1 + \lambda = 2 - \smash{\tfrac{2}{n+1}}$), we have \smash{$c(\eta)\ge 2 - \tfrac{2}{n+1}$}, which is the tight bound of~\cite{MOTWANI199417} (achieved by round-robin).\footnote{A competitive ratio of $2 - 2 / (n+1)$  can always be achieved (even without ML predictions)~\cite{MOTWANI199417}, so we do not need to consider consistency $1 + \lambda$ for $\lambda\ge 1- 2/(n+1)$}

On the other hand, Kumar, Purohit and Svitkina~\cite{NIPS2018_8174} give an algorithm that is $(1+\lambda)/2\lambda$-consistent and $2/(1-\lambda)$-robust for $\lambda\in (0,1)$.  
In the case of $n=2$, the robustness can be improved to $4/(3-3\lambda)$. We provide a significantly better trade-off  (\cref{fig:sc}) and a matching lower bound in this regime. 
Our algorithm is $2$-competitive over all parameter choices, 
while their algorithm has  robustness tends to infinity as consistency goes to $1$. 
\begin{theorem}[Tight bound for non-clairvoyant scheduling of $2$ jobs]\label{thm:2}
In the case of $2$ jobs, there is an algorithm that achieves $(1+\lambda)$-consistency and $(1+ 1/(1+6\lambda))$-robustness for   non-clairvoyant scheduling with machine-learned prediction,   for any $\lambda\in (0,1/3)$.\footnote{Kumar, Purohit and Svitkina~\cite{NIPS2018_8174}
uses large $\lambda$ to indicate low consistency, whereas we use small $\lambda$ for low consistency. The results are comparable up to a reparametrization. Also,    round-robin has a competitive ratio of $4/3$ for $2$ jobs (without using predictions) \cite{MOTWANI199417}, so we do not need to consider consistency $1 + \lambda$ for $\lambda\ge 1/3$.} Moreover, this bound is tight.
\end{theorem}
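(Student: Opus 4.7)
The plan has two parts, corresponding to the two claims in the theorem. For the upper bound, I will exhibit a simple two-rate scheduler. Assume without loss of generality that $y_1 \le y_2$, so the algorithm regards job~1 as the predicted shorter one, and allocate processing rate $\alpha = 1/(1+3\lambda)$ to job~1 and $1-\alpha$ to job~2 until one job completes; thereafter, all remaining rate goes to the unfinished job. The key identity is that the algorithm's total completion cost equals $\tau + x_1 + x_2$, where $\tau$ is the time at which the first job completes, because the second job always completes at time $x_1 + x_2$ by work conservation. A case analysis on which job the algorithm completes first (which depends on $\alpha$) and which job is actually shorter (which depends on $(x_1, x_2)$) shows that the worst-case ratio over all $(x_1, x_2)$ equals $\max\bigl\{(1+2\alpha)/(3\alpha),\, 2/(2-\alpha)\bigr\}$. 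With $\alpha = 1/(1+3\lambda) \ge 1/2$ (valid for $\lambda < 1/3$), the two terms evaluate to $1+\lambda$ (attained on the perfect-prediction instance $x_1 = x_2 = y_1 = y_2$) and $1 + 1/(1+6\lambda)$ (attained when $x_2 = x_1(1-\alpha)/\alpha$), respectively; the latter dominates, giving the claimed consistency and robustness.

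For the matching lower bound, I will use a single parametric adversary. Fix any deterministic algorithm and write its schedule as $p(t) \in [0,1]$, the rate allocated to job~1 before any completion is observed; set $W_1(t) = \int_0^t p(s)\,ds$ and $W_2(t) = t - W_1(t)$. Consider the tied prediction $y_1 = y_2 = 1$ with perfect instance $x_1 = x_2 = 1$: since $\mathrm{OPT} = 3$, consistency forces $\min(\tau_1, \tau_2) \le 1 + 3\lambda$, where $\tau_i := \min\{t : W_i(t) = 1\}$. Relabeling so that $\tau_1 \le \tau_2$, we have $\tau_1 \in [1,\, 1+3\lambda]$. Now take the adversarial instance $(x_1, x_2) = (1,\, \tau_1 - 1 + \epsilon)$ for small $\epsilon > 0$. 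Since $W_2(\tau_1) = \tau_1 - 1 < x_2$, job~1 is the first to complete, at time $\tau_1$; hence the algorithm pays $\tau_1 + x_1 + x_2 = 2\tau_1 + \epsilon$, while $\mathrm{OPT} = 2x_2 + x_1 = 2\tau_1 - 1 + 2\epsilon$. Letting $\epsilon \to 0^+$, the ratio tends to $2\tau_1/(2\tau_1 - 1)$, a function decreasing in $\tau_1$; since $\tau_1 \le 1 + 3\lambda$, the ratio is at least $(2+6\lambda)/(1+6\lambda) = 1 + 1/(1+6\lambda)$, as desired.

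The main obstacle will be identifying the right adversarial instance: natural candidates such as $(x_1, x_2) = (1,\, 0^+)$ are well handled (with ratio~$2$) by the algorithm that fully commits to job~1 on the perfect prediction, giving only the trivial $\lambda = 0$ bound. The trick is to pick $x_2$ exactly at the threshold $W_2(\tau_1)$ of the algorithm's own ``waste'' on job~2 in the consistency instance, so that job~1 still finishes first in the algorithm's schedule despite job~2 being shorter in reality; this forces the algorithm to pay the full processing of job~1 while OPT would have done job~2 first. A secondary issue is extending to randomized algorithms, which I would handle via Yao's minimax principle applied to the symmetric pair of hard instances $(1,\, \tau_1-1+\epsilon)$ and $(\tau_1-1+\epsilon,\, 1)$, against which any randomized algorithm has expected ratio at least $1 + 1/(1+6\lambda)$.
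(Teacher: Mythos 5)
Your proof is correct, and the upper-bound half takes a genuinely different route from the paper. For the lower bound, your argument is the same as the paper's in disguise: writing $\tau_1$ for the time at which job $1$ completes in the perfect instance and $W_2(\tau_1)=\tau_1-1$ for the work done on job $2$ at that moment, your adversary sets $x_2 = W_2(\tau_1)+\epsilon$, which is precisely the paper's $x_2 = d(2,1)+\epsilon$; the resulting bound $2\tau_1/(2\tau_1-1)\ge 1+1/(1+6\lambda)$ is identical. For the upper bound, however, your algorithm is not the paper's. The paper's \texttt{Two-Stage-Schedule} runs round-robin until the consistency budget is exhausted, then goes greedy in predicted order, with a separate fallback rule that switches to round-robin as soon as any misprediction is observed. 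Your fixed two-rate policy (rate $\alpha = 1/(1+3\lambda)$ on the predicted-shorter job, $1-\alpha$ on the other, full rate on whichever survives) is simpler: it has no stages, needs no misprediction detection, and uses the predictions only ordinally --- it never looks at the magnitudes of $y_1,y_2$. Your case analysis via the identity $\mathsf{ALG} = \tau + x_1 + x_2$ and the bound $\max\bigl\{(1+2\alpha)/(3\alpha),\, 2/(2-\alpha)\bigr\}$ checks out (both branches of Case B collapse to $2/(2-\alpha)$ at the boundary $x_2 = x_1(1-\alpha)/\alpha$), and for $\lambda<1/3$ the second term indeed dominates, giving exactly $(1+\lambda)$-consistency and $(1+1/(1+6\lambda))$-robustness. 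The only loose end is your closing remark about randomization: the proposed Yao distribution over the pair $(1,\tau_1-1+\epsilon)$ and $(\tau_1-1+\epsilon,1)$ is not valid as stated, since $\tau_1$ is a quantity derived from the algorithm and a Yao hard distribution must be fixed before the algorithm is chosen. This does not affect the theorem, since both you and the paper establish tightness against deterministic algorithms and the matching upper bound is deterministic.
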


\subsection{Related work}
For learning-based ski-rental, the result of \cite{NIPS2018_8174} has since been extended by~\cite{lee2019learning, gollapudi2019online}.   Scheduling with predictions is also studied by \cite{soda20, mitzenmacher2019scheduling, mitzenmacher2019supermarket}, though under different prediction models or problem settings.
The results of \cite{DBLP:conf/icml/LykourisV18} on online caching with ML predictions have been improved and generalized by~\cite{antoniadis2020online,rohatgi2020near,deb20weight,wei2020better}. 
Several other learning-augmented online problems have also been considered in the literature, including  matching, optimal auctions and bin packing~\cite{devanur2009adwords,kumar2018semi,medina2017revenue,antoniadis2020secretary,recent}.

Online algorithms (without ML predictions) are a classical subject in the algorithms literature. The (classic) ski-rental problem is well-understood: It is known that there exists a
$2$-competitive deterministic algorithm~\cite{Karlin1988}. This can be further improved to $e/(e-1)$ using randomization and is known to be optimal~\cite{DBLP:journals/algorithmica/KarlinMMO94}.
There are also numerous extensions of the problem, including snoopy caching~\cite{Karlin1988} and dynamic TCP acknowledgment~\cite{karlin2001dynamic}. 
The non-clairvoyant scheduling problem was first studied by~\cite{MOTWANI199417}. They show that for $n$ jobs the round-robin heuristic achieves a competitive ratio of $2-2/(n+1)$ and  provide a matching lower bound. They also show that randomization provides at most a minor lower-order improvement to the competitive ratio.  Our work revisits these classical results by extending their lower bounds to settings where we want to optimize for consistency (with respect to a prediction) in addition to worst-case competitive ratio.

Another related line of inquiry is the study of online problems in   stochastic settings, where the inputs come from certain distribution~\cite{hentenryck2009online,feldman2009online, DBLP:journals/talg/MahdianNS12, DBLP:conf/soda/MirrokniGZ12, mitzenmacher2019scheduling, esfandiari2018allocation}. We note that this model differs from ours in that we do not make any assumptions on the distribution or stochasticity of inputs.

Finally, using machine learning to design algorithms under uncertainty has been explored in other settings as well, such as  online learning~\cite{kong2018new, bhaskara2020online} and data streams \cite{ali19, Aamand2019LearnedFE, jiang2020learningaugmented, cohen2020composable}. 
A number of works also study learning-based methods for numerical linear algebra, combinatorial optimization, and integer programming~\cite{bello2016neural,khalil2017learning,pmlr-v80-balcan18a,nazari2018reinforcement,NIPS2018_7335,kool19,selsam2018learning,amizadeh2018learning, chawla2019learning,indyk2019learning,alabi2019learning,dao2019learning}.

\subsection{Preliminaries and notations}
In our analyses that follow,  we  use $\mathsf{ALG}$ to denote the cost incurred by the algorithm  on a given input and prediction. We   use $\mathsf{OPT}$ to denote the optimal cost achievable by an algorithm with full knowledge of the future (i.e., an offline algorithm). Note that $\mathsf{ALG}$ is a function of the input and the prediction, while $\mathsf{OPT}$  depends only on the input. The competitive ratio for a given input and prediction is simply the ratio $\mathsf{ALG} / \mathsf{OPT}$.

In terms of this notation, an algorithm is $\cons$-consistent if $\mathsf{ALG} / \mathsf{OPT}\le\cons$ for all situations where the input is the same as the prediction; an algorithm is $\rob$-robust if $\mathsf{ALG} / \mathsf{OPT}\le\rob$ for all pairs of input and prediction.

\section{Ski Rental}
In the ski-rental problem, a skier is out on a ski trip, which will end after the $x$-th day for some \textit{unknown} $x$. Each day, if she has not yet bought skis, the skier may either rent skis for \$$1$ or buy skis for \$$B$ and ski for free from then on. In the learning-augmented version of the problem, the skier is also provided with a machine-learned prediction $y$ of $x$ that she may use to aid her decision. 

We first state the algorithmic results of Kumar, Purohit and Svitkina~\cite{NIPS2018_8174}, which we will prove to be optimal. Their algorithms require a hyperparameter $\lambda \in (0,1)$ that dictates the trade-off between robustness and consistency. Given $\lambda$, the deterministic and randomized algorithms of \cite{NIPS2018_8174} for ski rental with machine-learned predictions proceed as follows:
\begin{algo}
  \textul{$\texttt{Deterministic-Ski-Rental}$}$(y, B)$:\+\\
    If $y \geq B$, \+\\
    Buy at the start of day $\lceil \lambda B \rceil$.\-\\
    Otherwise,\+\\
    Buy at the start of day $\lceil B / \lambda \rceil$.
\end{algo}
\begin{algo}
  \textul{$\texttt{Randomized-Ski-Rental}$}$(y, B)$:\+\\
    If $y \geq B$, \+\\
    Let $k =\lceil \lambda B \rceil$.\-\\
    Otherwise,\+\\
    Let $k= \lceil B / \lambda \rceil$.\-\\
    Select day $i \in [k]$ with probability proportional to $\left(1-1/B\right)^{k-i}$.\\
    Buy at the start of day $i$.
\end{algo}

Kumar, Purohit and Svitkina~\cite{NIPS2018_8174} show that the algorithms achieve the following robustness-consistency trade-offs:

\begin{theorem}[Theorem 2.2 of~\cite{NIPS2018_8174}]\label{thm:ski-det}
Given a parameter $\lambda\in (0,1)$, the \texttt{Deterministic-Ski-Rental} algorithm is $(1 + 1/\lambda)$-robust and $(1 +\lambda)$-consistent. 
\end{theorem}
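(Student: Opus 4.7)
The plan is to analyze the algorithm by splitting on the two branches of its prediction-based decision and, in each branch, further splitting on whether the actual number of ski days $x$ reaches the scheduled buy day. In both branches, I will compare the algorithm's cost to $\mathsf{OPT} = \min(x, B)$ and show that the ratio is always at most $1 + 1/\lambda$ for robustness, and at most $1 + \lambda$ whenever $y = x$ for consistency.

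First I would handle the branch $y \geq B$, where the algorithm buys on day $k = \lceil \lambda B \rceil$. If $x < k$, the skier only rents; since $x < k \leq \lambda B + 1$ and (in the relevant subrange) $\mathsf{OPT} = x$, the ratio is $1$. If $x \geq k$, the total cost is $(k-1) + B \leq \lambda B + B = (1+\lambda) B$; then against $\mathsf{OPT} = B$ (when $x \geq B$) the ratio is at most $1 + \lambda$, while against $\mathsf{OPT} = x \geq \lambda B$ (when $k \leq x < B$) the ratio is at most $(1+\lambda)B/(\lambda B) = 1 + 1/\lambda$. Next I would handle the branch $y < B$, where the algorithm buys on day $k = \lceil B/\lambda \rceil > B$. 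If $x < k$ the cost is $x$, which divided by $\mathsf{OPT} = \min(x,B)$ is either $1$ (when $x \leq B$) or at most $k/B \leq 1 + 1/\lambda$ (when $B < x < k$). If $x \geq k$, the cost is $(k-1) + B \leq (1 + 1/\lambda)B$, giving ratio at most $1 + 1/\lambda$ against $\mathsf{OPT} = B$.

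Putting these subcases together, the worst ratio across all $(x,y)$ is $1 + 1/\lambda$, proving robustness. For consistency, set $y = x$: if $x \geq B$ we fall into the first branch with $x \geq B \geq k$, and the bound from that subcase gives ratio at most $1 + \lambda$; if $x < B$ we fall into the second branch with $x < B < k$, and the cost is simply $x = \mathsf{OPT}$, giving ratio $1 \leq 1 + \lambda$. Hence the algorithm is $(1+\lambda)$-consistent.

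The calculations are essentially routine; the only mild subtlety is correctly handling the ceilings, specifically the inequalities $\lceil \lambda B \rceil \leq \lambda B + 1$ and $\lceil B/\lambda \rceil \leq B/\lambda + 1$, which for integer $x$ are tight enough that the additive slack disappears into the dominant term. I would note this once and then suppress the pedantic ceiling bookkeeping in the rest of the case analysis. No other obstacle is expected.
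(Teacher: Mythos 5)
Your proof is correct. Note, however, that this paper does not prove \Cref{thm:ski-det} at all---it is quoted directly from Kumar, Purohit, and Svitkina~\cite{NIPS2018_8174} as the algorithmic upper bound that the paper's lower bounds (\Cref{thm:det-lo,thm:rand-low}) are then shown to match. So there is no in-paper proof to compare against. That said, your case analysis (branching on $y\geq B$ versus $y<B$, then on whether $x$ reaches the buy day $k$, and bounding $(k-1)+B$ against $\mathsf{OPT}=\min(x,B)$ using $\lceil\lambda B\rceil\leq\lambda B+1$ and $\lceil B/\lambda\rceil\leq B/\lambda+1$) is exactly the standard argument given in the original reference, and each of the subcase bounds you state checks out.
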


\begin{theorem}[Theorem 2.3 of~\cite{NIPS2018_8174}]\label{thm:ski-rand}
Given a parameter $\lambda\in (0,1)$, the $\texttt{Randomized-Ski-Rental}$ algorithm is $\bigl(\frac1{1-e^{-(\lambda-1/B)}}\bigr)$-robust and $\bigl(\frac\lambda{1-e^{-\lambda}}\bigr)$-consistent. 
\end{theorem}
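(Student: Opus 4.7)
The plan is to obtain a clean closed form for the expected cost of the randomized algorithm, and then to compare it against $\mathsf{OPT}=\min(x,B)$ on each branch of the algorithm. Write $r = 1-1/B$ and $Z = \sum_{i=1}^{k} r^{k-i}$, so that $p_i = r^{k-i}/Z$. I would decompose the expected cost day by day: for each $j\le\min(x,k)$, the skier rents on day $j$ with probability equal to the tail $1-q_j := \sum_{i>j} p_i$, and she pays $B$ at some point with total probability $q_{\min(x,k)}$. Summing the geometric tail yields the identity
\[
1-q_j = \frac{1-r^{k-j}}{1-r^k},
\]
so that
\[
E[\mathsf{ALG}] = B\,q_{\min(x,k)} + \sum_{j=1}^{\min(x,k)} (1-q_j).
\]
Substituting the formulas for $q_j$ and $1-q_j$, the boundary terms cancel via the key identity $Br = B-1$, and a short algebraic simplification produces the clean closed form
\[
E[\mathsf{ALG}] = \frac{\min(x,k)}{1-r^k}.
\]

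With this closed form in hand, consistency becomes a routine case split on the branch taken by the algorithm. Setting $y=x$: if $y\ge B$, then $k = \lceil\lambda B\rceil \le B \le x$, so $\mathsf{OPT} = B$ and the ratio equals $(k/B)/(1-r^k)$; using $k/B$ close to $\lambda$ and $r^k \le e^{-k/B} \le e^{-\lambda}$ yields the claimed $\lambda/(1-e^{-\lambda})$ bound (up to the $O(1/B)$ slack arising from the ceiling). If instead $y<B$, then $k = \lceil B/\lambda\rceil > x$, so the ratio collapses to $1/(1-r^k) \le 1/(1-e^{-1/\lambda})$, which is strictly dominated by $\lambda/(1-e^{-\lambda})$ for $\lambda \in (0,1)$.

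For robustness, I would maximize $\min(x,k)/\bigl((1-r^k)\min(x,B)\bigr)$ over all $x$, separately on each branch. When $y\ge B$ (so $k\le B$), the adversary picks any $x\le k$, yielding ratio $1/(1-r^k)$; combining $k\ge\lambda B$ with $r^k \le e^{-k/B}$ and accounting for the ceiling recovers the stated $1/(1-e^{-(\lambda-1/B)})$ bound. When $y<B$ (so $k\ge B$), the adversary picks $x\ge k$, yielding ratio $(k/B)/(1-r^k) \approx (1/\lambda)/(1-e^{-1/\lambda})$, which one verifies is dominated by the previous bound for $\lambda\in(0,1)$.

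The main obstacle is the first step: verifying that the seemingly complicated expectation collapses to $\min(x,k)/(1-r^k)$. The cancellations that make the algebra go through rely crucially on the identity $Br = B-1$, which is precisely why the algorithm weights day $i$ by $(1-1/B)^{k-i}$. Once the closed form is established, the consistency and robustness bounds follow from the standard estimate $(1-1/B)^k\le e^{-k/B}$ together with the definition of $k$ on each branch.
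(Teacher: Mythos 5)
The paper does not prove this theorem; it is imported verbatim from Kumar, Purohit, and Svitkina as their Theorem~2.3, so there is no in-paper proof to compare against. I will therefore evaluate your proposal on its own.

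Your closed form $\mathbb{E}[\mathsf{ALG}] = \min(x,k)/\bigl(1-(1-1/B)^k\bigr)$ is correct, and the day-by-day decomposition with the geometric telescoping via $B(1-r)=1$ is exactly the right way to get it. The robustness computation on the $y\ge B$ branch is also sound: $k\ge\lambda B$ gives $(1-1/B)^k\le e^{-\lambda}$, which is even stronger than what the stated bound requires.

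The consistency step, however, has a gap that you half-acknowledge. You claim the bound $\lambda/(1-e^{-\lambda})$ holds ``up to the $O(1/B)$ slack arising from the ceiling,'' but the theorem allows no slack, and with $k=\lceil\lambda B\rceil$ as the algorithm is transcribed in this paper the bound is actually \emph{false}: take $B=10$, $\lambda=0.51$, so $k=6$ and $k/\bigl(B(1-(1-1/B)^k)\bigr)=6/\bigl(10(1-0.9^6)\bigr)\approx 1.281$, which exceeds $\lambda/(1-e^{-\lambda})\approx 1.277$. The rounding direction is the culprit: the original algorithm of Kumar et al.\ sets $k=\lfloor\lambda B\rfloor$ on the $y\ge B$ branch, and with the floor the argument closes cleanly. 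Writing the ratio as $(k/B)/\bigl(1-(1-1/B)^k\bigr)$ and using $(1-1/B)^k\le e^{-k/B}$, the ratio is at most $h(k/B)$ where $h(t)=t/(1-e^{-t})$; since $h$ is increasing and $k/B\le\lambda$ (this is precisely where the floor is needed), one concludes $h(k/B)\le h(\lambda)=\lambda/(1-e^{-\lambda})$. The $-1/B$ in the stated robustness bound is likewise the trace of the floor: $k\ge\lambda B-1$ gives $(1-1/B)^k\le e^{-(\lambda-1/B)}$ and hence robustness $1/(1-e^{-(\lambda-1/B)})$ exactly. So you should flag the ceiling as an apparent transcription error rather than absorb it into an unquantified slack term. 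Finally, your assertions that the $y<B$ branch is ``dominated'' are numerically plausible but unproven and also want a monotonicity argument via $h$.
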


Notice that for large $B$, our randomized lower bound  (\autoref{thm:rand-low}) essentially matches the guarantee of~\autoref{thm:ski-rand}.

\subsection{Deterministic Lower Bound}
In this section, we prove~\autoref{thm:det-lo}, which also appeared in~\cite{gollapudi2019online,recent}. Since the algorithm is deterministic, we proceed by a an adversarial argument. 
Let $x$ be the last day of the ski season. The high-level idea is to fix a specific $y$, and then consider two instances, one where $x = y$ and one where $x\neq y$. 
Since the algorithm does not know $x$, it cannot distinguish between these two cases and therefore must output a unique day $t$ (for purchasing skis) given $y$ and $B$. 
Suppose $y$ is large, say, greater than $B$. Then, intuitively, $t$ must be fairly small to satisfy consistency.
Given this constraint, in the other instance, we let the adversary choose an $x\neq y$ that yields the worst possible competitive ratio.
We will show that this competitive ratio indeed matches the robustness upper bound. 
\begin{proof}[Proof of \autoref{thm:det-lo}]
  Let $y$ be the prediction and $\eta = |y-x|$ be the error. Consider a deterministic algorithm that achieves $(1+\lambda)$ consistency.
  Suppose $y>  (1+\lambda) B$, and let $t$ be the day on which the algorithm purchases skis (given $y$ and $B$).  
  
  First, suppose $t \geq  y$.
  When $x= y$, we have $\OPT = B$ and $\ALG =y$. Then the competitive ratio is  $y/B$, which must be bounded by $1+\lambda$ by our consistency requirement, but this contradicts the assumption $y> (1+\lambda)B$.
  Second, suppose $B<t < y$. Again, when $x=y$, $\OPT = B$, and $\ALG = t+B-1$.  By the $(1+\lambda)$-consistency, $(t+B-1)/B \leq 1+\lambda$. Thus, $(t-1)/ B\leq \lambda <1$, contradicting the assumption that $t> B$. 
  Therefore, simply to achieve $(1+\lambda)$-consistency, the algorithm must output $t\le B$. Now under this condition, we consider two cases. We use the case when $y=x$  to derive a bound on $\lambda$, and apply this along with an adversarial argument in the case when $y\neq x$ to obtain our robustness lower bound. 
  \begin{enumerate}[(i)]
      \item Suppose $x = y$. Since $y > B$, we have $\OPT = B$. On the other hand, $\ALG = t+B-1$, as $ t < x$. Thus, the algorithm does $1+ (t-1)/B$ times worse than optimal. Assuming that the algorithm is $(1+\lambda)$-consistent, we have $1+ (t-1)/B \leq 1+\lambda$, so $t\le\lambda B + 1$.
      \item Suppose $x\neq y$. We adversarially set $x = t$; note that $x \leq B$. Thus, $\OPT = x = t$ and $\ALG = t + B-1$. 
      Our bound on $t$ from (i) now lower bounds the competitive ratio as $(t + B - 1) / t\ge 1 + (B - 1) / (\lambda B + 1)$. For large $B$, this lower bound approaches $1 + 1 / \lambda$. 
      This shows that $c(\eta) \geq 1 + 1/\lambda$ and thus completes the proof.\qedhere
  \end{enumerate}
\end{proof}

\subsection{Randomized Lower Bound}
 The starting point of our randomized lower bound is the well-known fact that the ski-rental problem can be expressed as a linear program (see, \eg, \cite{buchbinder2009design}).  Our key observation then is that the consistency and robustness constraints are in fact also linear. Somewhat surprisingly, we show that the resulting linear program can be solved \textit{analytically} in certain regimes.  By exploiting the structure of the linear program, we will determine the optimal robustness for any fixed consistency, and this matches the trade-off given by~\autoref{thm:ski-rand} (when $y \gg B$ and for large $B$).

The proof of our randomized lower bound (\Cref{thm:rand-low}) is fairly technical. Thus, we defer the proof to Appendix~\ref{apx:A} and  only present a   sketch here.

\begin{proof}[Proof sketch of \autoref{thm:rand-low}]
As a first step,
we can characterize algorithms for ski rental as feasible solutions to an infinite linear program, with variables $\{p_i\}_{i\in\mathbb N}$ indicating the probability of buying at day $i$. The constraints of robustness and consistency can be written as linear constraints on this representation. Given $\gamma$ and $\beta$, understanding whether a $\rob$-robust and $\cons$-consistent algorithm exists therefore reduces to checking if this linear program is feasible. (In particular, we do not have an objective for the linear program.)  

First, we ask that the $p_i$'s define a probability distribution. That is, $p_i \geq 0$ and 
\begin{align}
    \sum_{i=1}^\infty p_i= 1.   
\end{align}
Second, to satisfy the consistency constraint, the algorithm must have expected cost within $\beta\cdot \OPT$ when $y = x$. In this case, the ski season ends at $i=y$, so there is no additional cost afterwards.
\begin{align}
   \sum_{i=1}^y (B+i-1)p_i
   +
   y\sum_{i=y+1}^\infty  p_{i}  \leq \beta \min\{ B , y\}.  
\end{align}
Third, each value of $x$ gives a distinct constraint for robustness, where the left side is the expected cost and the right side is $\gamma \cdot \OPT$.  When $x\leq B$, $\OPT = x$, so we have
\begin{align}
\sum_{i=1}^x (B+i-1)p_i + x\sum_{i=x+1}^\infty p_i \leq \rob x
\quad\forall x \leq B.
\end{align}
If $x> B$, then $\OPT = B$. The robustness constraints are  infinitely many, given by
\begin{align}
\sum_{i=1}^x (B+i-1)p_i + x\sum_{i=x+1}^\infty p_i \leq \rob B
\quad\forall x > B.
\end{align}
Having  set up this LP, the remainder of the proof follows in two steps. First, we show that this (infinite) LP can be reduced to a finite one with $B+1$ constraints and $y$ variables. We then proceed to analytically understand  the   solution to the LP. This allows us to lower bound the parameter $\rob$ given any $\cons$, and it indeed matches  the upper bound given by~\cite{NIPS2018_8174}.
\end{proof}

\section{Non-clairvoyant Scheduling}
In the non-clairvoyant scheduling problem, we have to complete $n$ jobs of unknown lengths $x_1, x_2,\cdots, x_n$ using a single processor. The processor only learns the length of a job upon finishing that job.  
The goal in this problem is to schedule the jobs with preemptions to minimize the total completion time, \ie, the sum of the times at which each job finishes. 
Observe that no algorithm can achieve a non-trivial guarantee if preemptions are disallowed. 
The problem has been well-studied in the classic setting. Motwani, Phillips, and Torng  \cite{MOTWANI199417} show that the \textit{round-robin} (RR) algorithm achieves $2-2/(n+1)$ competitive ratio, which is the best possible among deterministic algorithms. 
The algorithm simply assigns a processing rate of $1/k$ to each of the $k$ unfinished jobs at any time. (Note that since preemption is allowed, we can ease our exposition by allowing concurrent jobs run on the processor, with rates summing to at most $1$.)

Now, suppose one has access to a machine-learned oracle that produces predictions $y_1,y_2,\cdots, y_n$ of the processing times $x_1,x_2,\cdots, x_n$. 
Define $\eta = \sum_i |x_i -y_i|$ to be the total prediction error. 
We would like to design algorithms that achieve  a better competitive ratio than $2-2/(n+1)$ when $\eta = 0$ and while preserving some constant worse-case guarantee.

\subsection{A General Lower Bound}
Our first result is a lower bound on the robustness-consistency trade-off that is tight at the endpoints of the trade-off curve. Note that since the classic work~\cite{MOTWANI199417} provides a $c=2-2/(n+1)$ competitive ratio (with no ML prediction), one can always achieve $c$-robustness and $c$-consistency simultaneously. Hence, as we remarked, \autoref{thm:njob} is tight at $\lambda = 0$ and $\lambda =  1 - \tfrac{2}{n+1}$. We now prove the theorem.

\begin{proof}[Proof of \autoref{thm:njob}]
Consider an algorithm that achieves $1+\lambda$ consistency. Let the   predictions be $y_1 = y_2 = \cdots = y_n = 1$. Let $d(i,j)$ denote the amount of processing time on job $i$ before job $j$ finishes. Assume without loss of generality that job $1$ is the first job to finish and that when it finishes, we have $d(i, i)\ge d(j, j)$ for all $i < j$. Consistency requires
\[ (1 + \lambda)\cdot\OPT = \frac{n(n+1)}{2}(1 + \lambda)\ge\sum_{i,j} d(i,j) + \sum_{i=2}^n (n - i + 1)(1 - d(i, i)), \]
where the first term represents the costs incurred thus far, and the second term represents the minimum cost required to finish from this state. Simplifying, we obtain the condition
\begin{equation}\label{eqn}
\frac{n(n+1)}{2}\lambda\ge \sum_{i=2}^n (i - 1)\cdot d(i, i),
\end{equation}
as $d(i, j) = d(i, i)$ for all $i$ at this point in the execution.

Now, consider a (adversarial) setup with $x_i = d(i, i) + \eps$, where we   take $d(i, i)$ to be as measured upon the completion of job $1$ and $\eps > 0$ to be a small positive number. For this instance, we have
\[ \OPT = 1 + \sum_{i=2}^n ix_i + O(\eps). \]
We also have, based on the   execution of the algorithm up to the completion of job $1$, that
\[ \ALG\ge n\bigp{1 + \sum_{i=2}^n x_i}. \]
To show a consistency-robustness lower bound, it suffices to lower bound $\ALG / \OPT$ subject to the consistency constraint. Equivalently, we can upper bound
\[ \frac{\OPT}{\ALG} - \frac 1n\le\frac 1n\bigp{\frac{1 + \sum_{i=2}^n (i-1)x_i + O(\eps)}{1 + \sum_{i=2}^n x_i}}. \]
Suppose we know a priori that the value of the numerator is $C + 1 + O(\eps)$ (i.e., $\sum_{i=2}^n (i-1)x_i = C$). To maximize the quantity on the right-hand side, we would want to have $\sum_{i=2}^n x_i$ be as small as possible subject to the constraints that $x_i\ge x_j\ge 0$ if $i < j$ and
\[ \sum_{i=2}^n (i-1)x_i = C. \]
Observe that this optimization problem is a linear program. For this linear program, suppose we have a feasible solution with $x_i > x_{i+1}$. Such a solution cannot be optimal, as we can set $x_i\gets x_i - \frac{\alpha}{i-1}$ and $x_{i+1}\gets x_{i+1} + \frac{\alpha}{i}$ for sufficiently small $\alpha > 0$, reducing the objective while remaining feasible. 
Thus, if an optimal solution exists, it must have $x_2 =x_3= \cdots = x_n$. It is not hard to see that this linear program is bounded and feasible, so an optimum does exist. It follows that for a given $C$, we want to set $x_2=x_3 = \cdots = x_n = \frac{2C}{n(n-1)}$,
in which case the right-hand side is equal to
\[ \frac{C + 1 + O(\eps)}{1 + \frac{2C}{n}} - \frac{n-1}{2} + \frac{n-1}{2} = \frac{-\frac{n-3}{2} + O(\eps)}{ 1 + \frac{2C}{n}} + \frac{n-1}{2}. \]
To maximize the leftmost term, which has a negative numerator (for sufficiently small $\eps$), we want to maximize $C$. However, we know from \eqref{eqn} that $C = \sum_{i=2}^n (i-1)x_i\le\frac{n(n+1)}{2}\lambda$. Therefore, we have the upper bound
\[ \frac{\OPT}{\ALG} - \frac 1n\le \frac 1n\bigp{\frac{\frac{n(n+1)}{2}\lambda + 1 + O(\eps)}{1 + {(n+1)}\lambda}}. \]
Finally,  taking $\eps\to 0$ yields the desired bound
\[ \frac{\ALG}{\OPT}\ge\frac{n + n(n+1)\lambda}{1 + \frac{(n+1)(n+2)}{2}\lambda}. \tag*{\qedhere} \]
\end{proof}

\subsection{A  Tight Complete Trade-off for Two Jobs}

We now consider   the special case of having $n=2$ jobs. It is always possible to achieve $4/3$ competitiveness by round-robin~\cite{MOTWANI199417}, and with machine-learned predictions,   Kumar, Purohit, and Svitkina \cite{NIPS2018_8174}  proves an  $(1+\lambda)/2\lambda$-consistency and $4/(3-3\lambda)$-robustness trade-off. We show that this trade-off can be significantly improved and that our new bound is in fact tight.

\paragraph{Lower bound.} We start by proving our lower bound. 
Here, we remark that any lower bound for $k$ jobs directly implies the same lower bound for any $n\geq k$ jobs, since one can add $n-k$ dummy jobs with $0$ predicted and actual processing times. 
Thus, the lemma below also holds for $n > 2$. 
\begin{lemma}[Lower bound for non-clairvoyant scheduling]\label{lem:lbs}
For the non-clairvoyant scheduling problem of $2$ jobs, any algorithm that    achieves $(1+\lambda)$-consistency must be at least $1+(1/(1+6\lambda))$-robust for a $\lambda\in (0,1/3)$. 
\end{lemma}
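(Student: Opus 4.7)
The plan is to specialize the argument from the proof of \autoref{thm:njob} to $n=2$, where the calculation collapses to a short, transparent form and directly reveals the tight constant.

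I would take predictions $y_1=y_2=1$ and, by relabeling jobs if necessary, assume the algorithm finishes job~$1$ first on the perfectly-predicted instance $x_1=x_2=1$. Let $b$ denote the cumulative processing the algorithm has devoted to job~$2$ at the moment job~$1$ completes. Since idling only hurts the competitive ratio, I may assume the processor is always busy before any job finishes, in which case job~$1$ completes at time $1+b$ and job~$2$ at time $2$. Thus $\ALG = (1+b)+2 = 3+b$ while $\OPT = 3$, so the $(1+\lambda)$-consistency hypothesis forces
\[ b \le 3\lambda. \]

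For the adversarial instance, I would keep $x_1=1$ but set $x_2 = b+\eps$ for arbitrarily small $\eps>0$. The key observation is that, because the algorithm is non-clairvoyant, its execution coincides with the consistent case up to the first completion: at time $1+b$ only $b$ units have been processed on job~$2$, which is still strictly less than $x_2 = b+\eps$, so job~$2$ has not yet finished. Hence job~$1$ completes at time $1+b$, and then job~$2$ completes $\eps$ later at time $1+b+\eps$, giving $\ALG = 2+2b+\eps$. Meanwhile $\OPT$ schedules the shorter job first, attaining $\OPT = (b+\eps)+(1+b+\eps) = 1+2b+2\eps$. Letting $\eps\to 0$ yields
\[ \frac{\ALG}{\OPT} \;\ge\; \frac{2+2b}{1+2b} \;=\; 1 + \frac{1}{1+2b}, \]
which is decreasing in $b$. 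Combined with $b\le 3\lambda$, this gives exactly $1 + 1/(1+6\lambda)$, as required.

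The only delicate points are the WLOG job relabeling and the observation that the algorithm's pre-completion behavior is identical on the two instances; both are immediate for deterministic algorithms, which is the setting implicit in the proof of \autoref{thm:njob}. Extending to randomized algorithms -- replacing $b$ by its expectation and invoking convexity of the map $b\mapsto 1+1/(1+2b)$ via Jensen's inequality, or equivalently applying Yao's minimax principle against the deterministic adversarial input $x_2 = 3\lambda$ -- requires a small case analysis depending on whether $b\le x_2$ or $b>x_2$, and is the only place where any real additional work would be needed.
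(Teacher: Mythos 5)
Your proposal is correct and matches the paper's proof essentially step for step: your $b$ is the paper's $d(2,1)$, your derivation $b\le 3\lambda$ from consistency on the perfectly-predicted instance is the paper's inequality (5), and your adversarial instance $x_1=1$, $x_2=b+\eps$ with the resulting ratio $(2+2b)/(1+2b)\to 1+1/(1+6\lambda)$ is exactly the paper's computation. The brief aside about randomized algorithms is a reasonable observation but goes beyond what the paper addresses; the paper's proof, like yours, is written for deterministic algorithms.
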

\begin{proof}
Consider a $(1+\lambda)$-consistent algorithm $\mathcal{A}$.  
Suppose the inputs are predictions $y_1 = y_2 =1$.  
First, we focus on an instance $I$, where  $x_1 = y_1,x_2=y_2$. 
Let $d(i,j)$ denote the amount of processing time on job $i$ before job $j$ finishes for this instance,
and assume without loss of generality  that the algorithm finishes job $1$ first. 
Observe   in this scenario the consistency requirement asks that $\mathcal{A}$ must produce a schedule with total completion time at most $(1+\lambda) (2y_1 + y_2)=3+3\lambda$. As job $1$ finishes first, $d(1,2)=1$. Since $x_1=x_2=1$ and $\textsf{ALG} = x_1+x_2 + d(1,2)+d(2,1)$, we must have 
\begin{equation}\label{eq:scl}
    d(2,1) \leq \lambda(2y_1 + y_2) = 3\lambda.
\end{equation}
Now we consider an adversarial instance $I'$ with same predictions ($y_1=y_2=1$), but different choices of actual processing times. In particular, let $x_1 = 1$ but $x_2= d(2,1)+\epsilon$ for an infinitesimal constant $\epsilon$.
Since the inputs to the algorithm are the same as in the previous instance $I$, it would start off by producing the same schedule. 
In particular, the algorithm would finish job $1$ first at time $1+ d(2,1)$, then finish job $2$ immediately afterwards. 
Therefore,
\begin{equation}
    \textsf{ALG} = 2+2d(2,1)+\epsilon.
\end{equation}
On the other hand,
since $\lambda \leq 1/3$, $x_2 \leq x_1$, we have
\begin{equation}
    \textsf{OPT}= 2x_2 + x_1 = 2d(2,1) + 2\epsilon + 1.
\end{equation}
By~\eqref{eq:scl}, we get that the competitive ratio is at least $1+ 1/(1+6\lambda)$ as $\epsilon \rightarrow 0$.
\end{proof}

\paragraph{Upper bound.} 
To complete the proof of \autoref{thm:2}. We show that the algorithm from~\cite{NIPS2018_8174} can be improved. Our new scheduling scheme proceeds in two stages. First, it follows the round-robin algorithm until the consistency constraint is tight. Then, it processes jobs in a greedy order, starting with the job of minimum prediction time. We name the algorithm {\texttt{Two-Stage-Schedule}} and prove the following guarantee: 
\begin{lemma}[Algorithm for non-clairvoyant scheduling]\label{lem:ubs}
For the non-clairvoyant scheduling problem of $2$ jobs,
the algorithm {\texttt{Two-Stage-Schedule}} achieves $(1+\lambda)$-consistency and $(1+1/(1+6\lambda))$-robustness for a $\lambda\in (0,1/3)$.
\end{lemma}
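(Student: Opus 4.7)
The plan is to specify \texttt{Two-Stage-Schedule} precisely, verify consistency by a direct calculation, and then establish robustness through a case analysis on how the actual processing times relate to the predictions and the phase boundaries. Assume without loss of generality $y_1 \leq y_2$, and let $s = \min(\lambda(2y_1+y_2),\ y_1)$. Phase~1 runs round-robin for $2s$ units of time, so that each surviving job accumulates $s$ processing. Phase~2 commits the processor to job~$1$ until either job~$1$ completes or its total processing reaches $y_1$. If job~$1$ has not completed by the end of Phase~2, Phase~3 falls back to round-robin on the two remaining jobs until one completes, after which the last job runs alone. This matches the paper's informal description: round-robin until consistency is tight (yielding $s = \lambda(2y_1+y_2)$ whenever $\lambda \leq y_1/(2y_1+y_2)$) and then greedy processing in order of predicted length, with a robust fallback.

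To verify consistency, I would compute that in the ideal instance $x_i = y_i$, Phase~2 finishes job~$1$ at time $s + y_1$ and job~$2$ completes at time $y_1 + y_2$, giving $\mathsf{ALG} = 2y_1 + y_2 + s \leq (1+\lambda)(2y_1+y_2) = (1+\lambda)\cdot\mathsf{OPT}$.

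For robustness, I would split into three cases. In \emph{Case A} ($\min(x_1,x_2) \leq s$, so at least one job completes in Phase~1), the algorithm behaves like pure round-robin and the ratio is at most $4/3 \leq 1 + 1/(1+6\lambda)$ for $\lambda < 1/3$. In \emph{Case B} ($s < x_1 \leq y_1$, so job~$1$ completes in Phase~2), one computes $\mathsf{ALG} = 2x_1 + x_2 + s$; the subcase $x_1 \leq x_2$ yields ratio at most $4/3$, while the subcase $x_1 > x_2$ has its extremum at $x_1 \to y_1,\ x_2 \to s^+$, matching the lower bound configuration of Lemma~\ref{lem:lbs}. In \emph{Case C} ($x_1 > y_1$, triggering Phase~3), I would subdivide on whether $x_1 - y_1$ or $x_2 - s$ is smaller (which determines which job finishes first in the final round-robin), compute $\mathsf{ALG}$ explicitly in each subcase, and verify the ratio by extremizing over the feasible region.

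The main technical obstacle is the algebraic inequality in Case~B with $x_1 > x_2$: the limiting ratio is $2(y_1 + s)/(y_1 + 2s)$, which must be bounded by $(2+6\lambda)/(1+6\lambda)$. Substituting $s = \lambda(2y_1 + y_2)$ and cross-multiplying reduces this to $2\lambda(y_2 - y_1) \geq 0$, which holds since $y_2 \geq y_1$, with equality precisely when $y_1 = y_2$ — exactly the tight configuration from Lemma~\ref{lem:lbs}. The Case~C subcases are handled by analogous manipulations; the strict inequality $x_1 > y_1$ provides enough slack to keep the ratio at most $1 + 1/(1+6\lambda)$. Assembling the three cases establishes the robustness guarantee.
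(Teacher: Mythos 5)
Your proposal is correct and follows essentially the same approach as the paper: the same three-case split (a job finishes in round-robin Phase~1, job~1 finishes in greedy Phase~2, job~1 overruns its prediction and triggers a round-robin fallback), the same $\mathsf{ALG}$ formulas in each case, and the same extremizing argument pinning the tight ratio at $x_1 \to y_1$, $x_2 \to s^+$. The only cosmetic difference is that you carry $y_1 \le y_2$ symbolically and verify the $2\lambda(y_2-y_1)\ge 0$ slack explicitly, whereas the paper normalizes $y_1=1$ and asserts the worst case directly.
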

The proof can be found in Appendix~\ref{sec:ubs}. Finally, combining \cref{lem:ubs} and \cref{lem:lbs} proves \autoref{thm:2}.

\section{Conclusion}
In this paper, we give lower bounds for the learning-augmented versions of the ski-rental problem and non-clairvoyant scheduling. In doing so, we show that robustness-consistency trade-offs are deeply intrinsic to the design of online algorithms that are robust in the worst case yet perform well when machine-learned predictions are accurate. 

A broad future direction is to use our techniques to investigate tight robustness-consistency trade-offs for other learning-augmented online algorithms (e.g., online matching or generalizations of ski-rental) following the spate of recent works on this topic.

\section*{Acknowledgements}
We would like to thank Constantinos Daskalakis, Piotr Indyk, and Jelani Nelson for their comments on drafts of this paper. 

\bibliographystyle{alpha}
\bibliography{bib}

\clearpage
\appendix

\section{Proof of~\autoref{thm:rand-low}} \label{apx:A}
\paragraph{LP Construction}
First, consider the following  LP construction for the learning-augmented ski-rental problem:
We use $\rob$ to denote the robustness parameter and $\beta$  the consistency parameter. 
We assume without loss of generality that $\beta < \gamma$; otherwise, the consistency requirement is redundant.
Consider a infinite LP, with variables $\{p_i\}$ indicating the probability of buying at day $i$. 
First, we ask that the $p_i$'s define a probability distribution. That is, $p_i \geq 0$ and 
\begin{align}
    \sum_{i=1}^\infty p_i= 1 \label{eqn: prob} \tag{probability distribution}
\end{align}
Second, to satisfy the consistency constraint, the algorithm must have expected cost within $\beta\cdot \OPT$ when $y = x$. In this case, the ski season ends at $i=y$, so there is no additional cost afterwards.
\begin{align}
 \sum_{i=1}^y (B+i-1)p_i
   +
   y\sum_{i=y+1}^\infty  p_{i}  \leq \beta \min\{ B , y\}.
   \label{eqn: cons} \tag{consistency}
\end{align}
Third, each value of $x$ gives a distinct constraint for robustness, where the left side is the expected cost and the right side is $\gamma \cdot \OPT$.  When $x\leq B$, $\OPT = x$, so we have
\begin{align}
\sum_{i=1}^x (B+i-1)p_i + x\sum_{i=x+1}^\infty p_i \leq \rob x
\quad\forall x \leq B
\end{align}
If $x> B$, then $\OPT = B$. The robustness constraints are  infinitely many, given by
\begin{align}
\sum_{i=1}^x (B+i-1)p_i + x\sum_{i=x+1}^\infty p_i \leq \rob B
\quad\forall x > B
\end{align}

We remark that for each $p_i$, its coefficient is non-decreasing as we go down. We denote the robustness constraint corresponding to $x$ by $\mathcal{C}(x)$ and the entire (infinite) LP by $\mathcal{P}$.

From now on, we focus on the case when $y\geq 2B-1$. In this case, the consistency constraint is
  \begin{align}
   \sum_{i=1}^y (B+i-1)p_i
   + y\sum_{i=y+1}^\infty  p_{i} \leq \beta B. \label{eqn: consa} \tag{consistency'}
\end{align}

\paragraph{Reducing to a Finite LP}
We will show that $\mathcal{P}$ can be reduced to be a finite LP of $B+1$ constraints and $y$ variables. 

\begin{lemma}\label{lem:by}
 If $y \geq B$,  the robustness constraints between $\mathcal{C}(B)$ and $\mathcal{C}(y)$ are redundant.
\end{lemma}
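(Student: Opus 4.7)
The plan is to exploit the coefficient-monotonicity remark that immediately precedes the lemma: in the robustness constraint $\mathcal{C}(x)$, each variable $p_i$ appears with a coefficient that is non-decreasing in $x$. Combined with the fact that the right-hand side of $\mathcal{C}(x)$ stabilizes at $\gamma B$ as soon as $x > B$, this will let me conclude that every $\mathcal{C}(x)$ with $B < x < y$ is already implied by $\mathcal{C}(y)$.

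First I would record that whenever $x > B$ the right-hand side of $\mathcal{C}(x)$ equals $\gamma B$ by definition, so across the entire range $B < x \leq y$ all of the robustness constraints share the same right-hand side. It therefore suffices to show that the left-hand side of $\mathcal{C}(x)$ is monotonically non-decreasing in $x$ on this range, since then $\mathcal{C}(y)$ will be the tightest constraint among them and will dominate all the others.

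Second, I would compare the coefficient of a generic $p_i$ in $\mathcal{C}(x)$ against the one in $\mathcal{C}(x+1)$, splitting into three subranges. For $i \leq x$ both coefficients equal $B+i-1$. For $i = x+1$, the coefficient moves from $x$ (a uniform tail term of $\mathcal{C}(x)$) up to $B+x$ (a new head term of $\mathcal{C}(x+1)$), and $B+x \geq x$ since $B \geq 1$. For $i > x+1$ the coefficient moves from $x$ to $x+1$. In every case the coefficient weakly increases, and since $p_i \geq 0$, the left-hand side weakly increases as we pass from $\mathcal{C}(x)$ to $\mathcal{C}(x+1)$. Telescoping this from $x$ up to $y$ yields the desired domination.

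There is no real obstacle: the argument is coefficient-wise bookkeeping driven entirely by the two observations above. The only place that requires a touch of care is the boundary case $i = x+1$, where a term migrates out of the uniform-coefficient tail of $\mathcal{C}(x)$ into the arithmetic-progression head of $\mathcal{C}(x+1)$; this is the one step that actually invokes $B \geq 1$, which is automatic since $B$ is a buying cost.
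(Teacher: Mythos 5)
Your proof is correct, but it follows a genuinely different route from the paper's. The paper observes that each $\mathcal{C}(x)$ with $B < x < y$ is dominated by the consistency constraint~\ref{eqn: consa}: the consistency constraint has pointwise-larger coefficients on the $p_i$'s (so its left-hand side is at least as big) while its right-hand side $\cons B$ is at most $\rob B$, using the standing WLOG assumption $\cons \le \rob$. You instead show that these constraints are dominated \emph{within the robustness family} by $\mathcal{C}(y)$, via the coefficient-monotonicity bookkeeping you describe. Your argument is self-contained in that it does not invoke $\cons \le \rob$ at all, which is a small gain in generality. The trade-off is structural: the paper's proof implies the removed constraints directly from a constraint that is always retained in the final finite LP, whereas your proof implies them from $\mathcal{C}(y)$, which is itself discarded by a later reduction step. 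This is perfectly valid for establishing redundancy (the implication chain eventually bottoms out at the retained consistency constraint), but it makes the bookkeeping of which constraints survive slightly less transparent in the overall reduction to the finite LP.
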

\begin{proof}
Observe that these constraints are dominated by the  constraint \ref{eqn: consa}, since $\beta \leq \gamma$ and their left sides are bounded by the left side of \ref{eqn: consa}.
\end{proof}
    
 We start by identifying redundant variables in $\mathcal{P}$.
 
\begin{lemma}[Redundant variables] \label{lemma:y+1}
  If $y\ge 2B-1$ and $\mathcal{P}$ is feasible, then there exists a feasible solution to $\mathcal{P}$ such that $p_{i}= 0$ for all $i\geq y+1$.
\end{lemma}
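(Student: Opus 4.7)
The plan is to take an arbitrary feasible solution $\{p_i\}_{i\ge 1}$ to $\mathcal{P}$ and redistribute the tail mass $M = \sum_{i\ge y+1} p_i$ onto a single well-chosen index $j^{\star}\in[1,y]$, then verify that the resulting truncated solution satisfies every constraint of $\mathcal{P}$. This reduces an infinite-dimensional redistribution to the problem of picking a single index $j^{\star}$ correctly.

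My choice is $j^{\star} = B$: define $\tilde p_B = p_B + M$, $\tilde p_i = p_i$ for $i \in [1,y]\setminus\{B\}$, and $\tilde p_i = 0$ for $i\ge y+1$. The verification then proceeds constraint by constraint. The probability constraint is immediate. For $\mathcal{C}(x)$ with $x<B$, the variable $p_B$ and each $p_i$ with $i>y$ all appear with coefficient $x$, so the LHS is unchanged. The consistency LHS changes by exactly $(2B-1-y)\,M$, which is $\le 0$ precisely because $y\ge 2B-1$; hence consistency still holds. For $\mathcal{C}(x)$ with $x>y$, the LHS equals the new consistency LHS (since $\tilde p_i = 0$ for $i>y$) and is thus bounded by $\beta B \le \gamma B$. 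Finally, for $B\le x\le y$, the constraint $\mathcal{C}(x)$ is dominated by the (already verified) new consistency constraint via Lemma~\ref{lem:by}.

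The main obstacle is that moving mass into $p_B$ strictly increases the LHS of $\mathcal{C}(B)$ by $(B-1)M$, which would be fatal in the boundary case $y=2B-1$ were we to check $\mathcal{C}(B)$ directly. The resolution is to appeal to Lemma~\ref{lem:by}: because $\beta\le\gamma$, the consistency constraint is strictly stronger than $\mathcal{C}(B)$, so verifying consistency suffices. This also explains the precise threshold $y\ge 2B-1$ in the hypothesis: it is exactly the inequality $B + B - 1 \le y$, the condition ensuring that the coefficient of the new location $p_B$ in the consistency inequality does not exceed the coefficient $y$ of the removed tail.
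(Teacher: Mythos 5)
Your proposal is correct, and it is a cleaner, one-shot version of the paper's two-stage argument. The paper first iteratively folds $p_{y+2}, p_{y+3}, \ldots$ into $p_{y+1}$ (verifying the constraints at each step), and only then moves $p_{y+1}$ into $p_{y-B+1}$. You instead move the entire tail mass $M = \sum_{i \ge y+1} p_i$ directly to a single interior index. Both approaches rest on the same observation that the constraints $\mathcal C(B), \ldots, \mathcal C(y)$ need not be checked pointwise because they are dominated by the (already verified) consistency constraint. Your choice of destination, $j^{\star}=B$, differs from the paper's effective choice of $y-B+1$: a destination $j^{\star}$ works provided $j^{\star}\ge B$ (so the coefficients in $\mathcal C(1),\ldots,\mathcal C(B-1)$ stay at $x$) and $B+j^{\star}-1\le y$ (so the consistency LHS does not increase), i.e., $j^{\star}\in[B,\,y-B+1]$ — an interval that is nonempty exactly when $y\ge 2B-1$. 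You pick the left endpoint; the paper picks the right endpoint, which preserves the consistency LHS exactly, whereas your move can only decrease it. Your framing also makes the role of the hypothesis $y\ge 2B-1$ transparent (it is the condition making this interval nonempty), which the paper's iterative phrasing obscures. The one subtlety you correctly flagged — that moving mass to $p_B$ worsens $\mathcal C(B)$ — is handled exactly as you say, by routing the verification of $\mathcal C(B),\ldots,\mathcal C(y)$ through the consistency constraint rather than checking them directly.
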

\begin{proof}
  Let $p$ be a feasible solution to $\mathcal{P}$.  
First, we argue that there exists a feasible solution $p'$ such that $p'_{y+2} = p'_{y+3}=\cdots =0$. 
To eliminate $p_{y+2}$, consider $p'$ where $p'_{y+2} = 0$, $p'_{y+1} =p_{y+1} + p_{y+2}$, and $p'_i = p_i$ for $i\notin \{y+1,y+2\}$. 
 Clearly, $p'$ still defines a probability distribution.  We now check $p'$ is feasible.
\begin{enumerate}[(i)]
    \item 
Since $y\geq B$, the consistency constraint  is satisfied.  
\item The robustness constraints from $\mathcal{C}(1)$ to $\mathcal{C}(B-1)$ are satisfied, by the coefficients in these constraints.

\item By~\hyperref[lem:by]{Lemma}~\ref{lem:by}, we focus on the robustness constraints from $\mathcal{C}(y+1)$. First, the coefficient of $p_{y+2}$ is greater than the coefficient of $p_{y+1}$ in the constraint $\mathcal{C}(i)$. for all $i\geq y+2$. Hence, $p'$ satisfies these constraints. 
Then, note that the constraint  $\mathcal{C}(y+1)$ is dominated by the constraint $\mathcal{C}(y+2)$, and thus $p'$ satisfies it.
\end{enumerate}
Applying this argument iteratively, we can eliminate all variables $p_i$ for $i \geq y+2$. Now it is easy to observe that constraints $\mathcal{C}(j)$ for $j\geq y+2$ are redundant.

Finally, to eliminate $p_{y+1}$, consider $p'_{y+1} = 0, p'_{y-B+1} =  p_{y-B+1} + p_{y+1}$, and $p'_i = p_i$ for $i\notin \{y-B+1,y+1\}$.  Observe that since $y\geq 2B-1$, 
\begin{enumerate}[(i)]
    \item    in the \ref{eqn: consa} constraint and constraint $\mathcal{C}(y+1)$,  the coefficient of $p_{y+1}$ and  $p_{y-B+1}$  are both $y$;
    \item in the constraints between $\mathcal{C}(1)$ and $\mathcal{C}(B-1)$,  the coefficient of $p_{y+1}$ and $p_{y-B+1}$ are both $B$.
\end{enumerate}
 It follows that all constraints are satisfied by $p'$.
\end{proof}
\begin{corollary}[Redundant constraints]
If $y\geq 2B-1$,  all robustness constraints in $\mathcal{P}$ are redundant except those between $\mathcal{C}(1)$ and $\mathcal{C}(B-1)$.
\end{corollary}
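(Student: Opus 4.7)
The plan is to combine Lemma~\ref{lem:by} and Lemma~\ref{lemma:y+1} to eliminate every robustness constraint $\mathcal{C}(x)$ with $x\ge B$. Lemma~\ref{lem:by} already handles the range $B\le x\le y$, so the remaining work is to show that each constraint $\mathcal{C}(x)$ with $x\ge y+1$ is implied by \ref{eqn: consa} together with the nonnegativity of the $p_i$.

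For the range $x\ge y+1$, I would first invoke Lemma~\ref{lemma:y+1} to restrict attention, without loss of generality, to feasible solutions satisfying $p_i=0$ for all $i\ge y+1$. Under this restriction, the left-hand side of $\mathcal{C}(x)$ collapses to
\[ \sum_{i=1}^{x}(B+i-1)p_i + x\sum_{i=x+1}^{\infty} p_i \;=\; \sum_{i=1}^{y}(B+i-1)p_i, \]
since both the truncated tail and the terms with $y+1\le i\le x$ vanish. Thus $\mathcal{C}(x)$ becomes $\sum_{i=1}^{y}(B+i-1)p_i\le\gamma B$, which is dominated by \ref{eqn: consa} because $\beta\le\gamma$ (by our standing assumption $\beta<\gamma$) and the consistency constraint's left-hand side additionally contains the nonnegative term $y\sum_{i=y+1}^{\infty}p_i$.

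Putting the two cases together: constraints $\mathcal{C}(B),\dots,\mathcal{C}(y)$ are redundant by Lemma~\ref{lem:by}, and constraints $\mathcal{C}(x)$ for $x\ge y+1$ are redundant by the reduction above, so the only possibly binding robustness constraints are $\mathcal{C}(1),\dots,\mathcal{C}(B-1)$, which is what the corollary asserts. I do not anticipate a real obstacle here; the one subtlety worth stating carefully is that Lemma~\ref{lemma:y+1} only promises the existence of \emph{some} feasible solution with $p_i=0$ for $i\ge y+1$, so the conclusion should be phrased as ``feasibility of $\mathcal{P}$ is equivalent to feasibility of the restricted LP with just the $B-1$ robustness constraints together with \ref{eqn: prob} and \ref{eqn: consa},'' rather than as pointwise redundancy for every feasible $p$.
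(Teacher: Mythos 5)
Your proof is correct and follows essentially the same route as the paper, which disposes of the corollary in one line by appealing to Lemma~\ref{lemma:y+1} and the structure of the robustness constraints; your version simply makes the two ranges ($B\le x\le y$ via Lemma~\ref{lem:by}, and $x\ge y+1$ via zeroing $p_i$ and comparing to \ref{eqn: consa}) explicit. Your closing remark that ``redundant'' should be read as preservation of feasibility rather than pointwise domination is a correct and worthwhile clarification of what Lemma~\ref{lemma:y+1} actually licenses.
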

\begin{proof}
  This follows directly from the definition of the robustness constraints and Lemma~\ref{lemma:y+1}.
\end{proof}

\begin{lemma}[Tight constraints]\label{lemma:tight}
  If $\mathcal{P}$ is feasible, there exists a solution such that for all $1 < k < y$, if $p_k > 0$, then the preceding robustness constraints $\mathcal{C}(k')$ for $1\le k' < k$ are all tight.
\end{lemma}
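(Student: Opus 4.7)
The plan is an exchange (mass-shifting) argument. I will show that, among all feasible solutions to $\mathcal{P}$, the one that minimizes the potential $\Phi(p) = \sum_i i\cdot p_i$ (biasing probability mass toward small indices) satisfies the claimed tightness property. Suppose for contradiction that a minimizer $p$ has some $k$ with $1 < k < y$, $p_k > 0$, and some $\mathcal{C}(k')$ with $1 \le k' < k$ slack; the goal is to exhibit another feasible solution with strictly smaller $\Phi$, contradicting minimality.

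The basic computation is the effect of transferring a small amount $\delta > 0$ from $p_k$ to $p_j$ for some $j < k$. Using the coefficient pattern (``$B+i-1$ if $i \le x$, and $x$ if $i > x$'') of $\mathcal{C}(x)$, a direct calculation gives: the LHS of $\mathcal{C}(x)$ changes by $(j-k)\delta < 0$ for $x \ge k$ (and likewise for the consistency constraint, since $y \ge k$), by $(B+j-1-x)\delta$ for $j \le x < k$, and by $0$ for $x < j$. So the only constraints such a shift can violate are the intermediate robustness constraints $\mathcal{C}(x)$ with $j \le x < k$.

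In the simplest case $k' = k-1$, a shift from $p_k$ to $p_{k-1}$ affects only $\mathcal{C}(k-1)$ among the earlier constraints (change $(B-1)\delta > 0$), and its slack absorbs this for sufficiently small $\delta$, strictly decreasing $\Phi$. In general, let $k^*$ be the largest index less than $k$ with $\mathcal{C}(k^*)$ slack, so $\mathcal{C}(k^*+1), \ldots, \mathcal{C}(k-1)$ are all tight. I would use a cascading shift: move $\delta_0$ from $p_k$ to $p_{k-1}$, then $\delta_1$ from $p_{k-1}$ to $p_{k-2}$, and so on down to $\delta_{k-k^*-1}$ from $p_{k^*+1}$ to $p_{k^*}$. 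Demanding that each tight $\mathcal{C}(k-m)$ for $m = 1,\ldots, k-k^*-1$ has zero net change produces the triangular system $(B-1)\delta_{m-1} = \sum_{j \ge m}\delta_j$, uniquely solvable with all $\delta_j > 0$ once $\delta_0 > 0$ is fixed. The residual $(B-1)\delta_{k-k^*-1}$ increase in the LHS of $\mathcal{C}(k^*)$ is absorbed by its slack, while $\Phi$ strictly decreases.

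The main obstacle is non-negativity of intermediate $p_{k-j}$ after the cascade, since its net change is $\delta_{j-1}-\delta_j$, which may be negative. I would handle this by choosing the cascade's path adaptively: if during the cascade some $p_{k-j}$ would go negative, truncate the cascade at that level and restart the argument with $k-j$ in place of $k$ and a correspondingly smaller effective slack index $k^*$. Because each restart strictly decreases the top index under consideration, the procedure terminates after finitely many steps and produces a feasible solution with strictly smaller $\Phi$, contradicting the minimality of $p$ and establishing the lemma.
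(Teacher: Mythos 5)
Your overall strategy---an exchange argument driven by the scalar potential $\Phi(p) = \sum_i i\,p_i$ rather than the paper's lexicographic minimization of the slack vector---is a legitimate alternative, and your per-constraint shift arithmetic and the triangular system $(B-1)\delta_{m-1} = \sum_{j\ge m}\delta_j$ are both correct. However, the handling of the non-negativity obstacle has a genuine gap, and this is precisely the place where the paper's proof contains an idea you're missing.

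First, the obstacle is misdiagnosed. Solving your system gives $\delta_m = \tfrac{B-1}{B}\delta_{m-1}$ for $1\le m\le N-1$ and $\delta_N = (B-1)\delta_{N-1}$ where $N = k-k^*-1$. Hence $p_{k-j}$ for $1\le j\le N-1$ changes by $\delta_{j-1}-\delta_j = \tfrac{1}{B}\delta_{j-1} > 0$: every intermediate variable \emph{increases}. The only variables that decrease are $p_k$ (by $\delta_0$, harmless since $p_k>0$) and $p_{k^*+1}$ (by $(B-2)\delta_{N-1}$). So the concern is about exactly one coordinate, not ``some $p_{k-j}$.'' Second, the truncate-and-restart fix does not close the gap: if $p_{k^*+1}=0$, restarting with $k^*+1$ in place of $k$ is blocked because the hypothesis $p_{k^*+1}>0$ fails, and truncating the cascade above $p_{k^*+1}$ leaves the tight constraint $\mathcal{C}(k^*+1)$ with a strictly positive residual change, destroying feasibility. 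The correct fix is the observation the paper isolates as its second claim (phrased in the contrapositive): comparing $\mathcal{C}(i)$ with $\mathcal{C}(i-1)$ shows that if $p_i = 0$ and $\mathcal{C}(i-1)$ has slack, then $\mathcal{C}(i)$ also has slack; equivalently, a tight $\mathcal{C}(i)$ (for $i < B$) forces $p_i > 0$. Since $\mathcal{C}(k^*+1)$ is tight by your choice of $k^*$ (or $k^*+1 = k$), $p_{k^*+1}>0$ is automatic, and the cascade goes through for small enough $\delta_0$. Once you have this observation, the cascade itself is unnecessary: a single shift from $p_{k^*+1}$ to $p_{k^*}$ already strictly decreases $\Phi$, only increases the left side of $\mathcal{C}(k^*)$ (absorbed by slack), leaves $\mathcal{C}(j)$ unaffected for $j<k^*$, and relaxes the consistency constraint and all $\mathcal{C}(j)$ with $j>k^*$. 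You should also note, as the paper does via compactness, that a $\Phi$-minimizer over the feasible set exists.

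Net comparison: the paper's lexicographic ordering lets it use a single adjacent shift and then an inductive argument to propagate tightness downward; your scalar objective requires either the cascade or (better) the same single adjacent shift at $k^*$, but either way it cannot avoid the $p_{k^*+1}>0$ lemma that you omitted.
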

\begin{proof}
  Using the \ref{eqn: prob} constraint, we can rewrite each robustness constraint $\mathcal C(i)$ as
  \begin{equation}
  \label{eqn:reform}
      (B-i)p_1 + (B- i+1)p_2 + \cdots + (B-1)p_i \le\rob\min\{B,i\} - i.
  \end{equation}
 Let $p$ be a feasible solution.  First, we claim that  when shifting probability mass from $p_{k}$ to $p_{k-1}$, the slack for all robustness constraints is non-decreasing, except for $\mathcal C(k-1)$.  Note that 
  \begin{enumerate}[(i)]
      \item for $k'\ge k$, since the coefficient of $p_{k-1}$ in $\mathcal C(k')$ is less than that of $p_k$,  we  strictly increase the slack; and
      \item 
       for $k' < k-1$, $\mathcal C(k')$ has no dependence on either $p_k$ or $p_{k-1}$, so the slack remains unchanged.
  \end{enumerate}
Second, we claim that if $\mathcal C(k-1)$ has non-zero slack, then $p_k > 0$ or $\mathcal C(k)$ has non-zero slack. Indeed, if $p_k = 0$, then constraint $\mathcal C(k-1)$ is stronger than constraint $\mathcal C(k)$. Since constraint $\mathcal C(k-1)$ has non-zero slack, constraint $\mathcal C(k)$ must also have non-zero slack.
  
  Let $s = (s_1,s_2,\ldots,s_{y-1})$ be the vector of slacks for the robustness constraints $\mathcal C(i)$. The above two claims together show that if a feasible solution is such that $p_k > 0$, but one of the preceding robustness constraints is not tight, then we can shift some probability mass so that the robustness slack vector becomes lexicographically smaller. (Here, we consider the lexicographic ordering on $\mathbb{R}^{y-1}$.) Equivalently, if the robustness slack vector is lexicographically minimal, then $p_k > 0$ implies all of the preceding robustness constraints are tight. It thus suffices to show that a lexicographically minimal robustness slack vector exists.
  
  Let $S$ be the set of robustness slack vectors that correspond to feasible solutions. Observe that $S$ is compact. The existence of a lexicographically minimal element in $S$ then follows because any compact subset of $\mathbb{R}^{y-1}$ contains a lexicographically minimal element. To see this last point, note that compactness implies there exists a element with minimum first coordinate. Now, restrict our compact set to this minimum first coordinate and repeat this argument for the second coordinate, and so on.
\end{proof}

Finally, to prove our main theorem, we need the following technical lemma.
  \begin{lemma}\label{lemma:tech}
  For all $B > 1$ and $x\in [0, 1]$, the following inequality holds:
  \[ \frac{1}{B}x-\left(1+\frac{1}{B-1}\right)^{-1}\left(\left(1+\frac{1}{B-1}\right)^x-1\right)\ge 0. \]
  \end{lemma}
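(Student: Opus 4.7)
The plan is to clear denominators and then reduce to a concavity argument. Let me set $a := 1 + \tfrac{1}{B-1} = \tfrac{B}{B-1}$, so that $a > 1$ and $a^{-1} = \tfrac{B-1}{B}$. Multiplying through by $B$, the claimed inequality is equivalent to
\[ x \;\geq\; (B-1)\bigl(a^x - 1\bigr) \qquad\text{for all } x \in [0,1]. \]
Define $f(x) := x - (B-1)(a^x - 1)$. The goal becomes showing $f(x) \geq 0$ on $[0,1]$.

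First I would check the endpoints by direct substitution: $f(0) = 0 - (B-1)(1 - 1) = 0$, and $f(1) = 1 - (B-1)(a - 1) = 1 - (B-1) \cdot \tfrac{1}{B-1} = 0$. So the inequality holds with equality at both endpoints, which is the right structure since both sides of the original inequality vanish at $x=0$ and $x=1$.

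The key step is then to compute
\[ f''(x) = -(B-1)(\ln a)^2\, a^x. \]
Since $B > 1$ gives $a > 1$ and hence $\ln a > 0$ and $a^x > 0$, we obtain $f''(x) < 0$ on $[0,1]$, so $f$ is strictly concave there. A concave function that vanishes at both endpoints of an interval is nonnegative throughout the interval (it lies above the chord connecting the endpoints, which is the zero function); this yields $f(x) \geq 0$ on $[0,1]$, completing the proof.

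I do not expect a genuine obstacle: the inequality is tight at both endpoints of the interval, the function is manifestly smooth, and once one notices this it is natural to try concavity. The only place where care is needed is keeping the direction of the inequality straight after the algebraic rearrangement (in particular, confirming that the factor $(B-1)$ is positive so that multiplication preserves the inequality), and making sure $a > 1$ to justify $\ln a > 0$, both of which follow from the hypothesis $B > 1$.
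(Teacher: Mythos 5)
Your proof is correct and is essentially the same argument as the paper's: both rearrange the inequality and use the fact that it holds with equality at $x=0$ and $x=1$ together with a convexity/concavity-of-the-exponential observation (the paper phrases this as a linear function dominating a convex one via Jensen, you phrase it as the difference being concave and vanishing at the endpoints, which is the same fact). The only cosmetic difference is that you multiply through by $B$ while the paper multiplies by $1+\tfrac{1}{B-1}$, but the resulting argument is identical in substance.
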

  \begin{proof}
    The stated inequality is equivalent to
    \[ \left(1 + \frac{1}{B-1}\right)\frac{1}{B}x + 1\ge \left(1 + \frac{1}{B-1}\right)^x. \]
    Observe that the left-hand side is linear and that the right-hand side is convex. Since the two sides are equal at $x = 0$ and $x = 1$, the desired inequality is true on the interval $[0, 1]$ by Jensen's inequality.
  \end{proof}
  Now we are ready to present the proof of the randomized lower bound.
\begin{proof}[Proof of~\autoref{thm:rand-low}]
  Fix a cost $B$. Given any value of robustness $\rob > 1$, we give a lower bound for the consistency $\cons$ in the ``hard'' case $y = 2B-1$. Assume for now that the LP with parameters $\rob$ and $\cons$ is feasible. We will derive constraints on $\cons$ in terms of $\rob$.
  
  By \Cref{lemma:y+1}, we know that there exists a feasible solution $p$ with $p_{y+1} = 0$. By \Cref{lemma:tight}, there exists a $k\le y$ such that $p_i > 0$ for $1\le i\le k$ and $p_i = 0$ for all $i > k$. Moreover, the first $k - 1$ constraints of the LP all have $0$ slack. In fact, $k\le B$ always: By our feasibility assumption, $\rob$ must be at least the optimal competitive ratio $c^* = e/(e-1)$ in the classic setting, since that setting has no consistency constraint. In the classic setting, we can achieve the optimal competitiveness with $k = B$ and have the first $k$ robustness constraints be tight \cite{DBLP:journals/algorithmica/KarlinMMO94}. 
  Thus, with  $\rob \geq c^*$, the robustness constraints are relaxed, so  we must also have $k\le B$. With these observations and the constraint $p_1 + p_2 + \cdots + p_k = 1$, we can determine the value of $k$ and the probabilities $p_1,\ldots,p_k$.
  
  It is not difficult to see via induction on $k$ (\eg, using the reformulation of \Cref{eqn:reform}) that for the first $k - 1$ constraints to each have $0$ slack, we must have
  \[ p_i = \frac{\rob - 1}{B - 1}\left(1 + \frac{1}{B-1}\right)^{i-1} \]
  for each $i$ between $1$ and $k-1$. From $p_1 + p_2 + \cdots + p_k = 1$, it follows that $k$ must be the smallest integer such that
  \[ \sum_{i=1}^k \frac{\rob - 1}{B - 1}\left(1 + \frac{1}{B-1}\right)^{i-1} = (\rob - 1)\left(\left(1 + \frac{1}{B-1}\right)^k - 1\right)\ge 1. \]
  Rearranging the inequality now gives us
  \[ k = \left\lceil\frac{\log\left(1 + \frac{1}{\rob - 1}\right)}{\log\left(1 + \frac{1}{B - 1}\right)} \right\rceil. \]
  This choice of $k$, our definition of $p_i$ for $1\le i\le k-1$, and the constraint $\sum_{i=1}^k p_i = 1$ fully determine a feasible solution.
  
  By our assumption that the LP is feasible, this setting of $p_i$ values must also satisfy the consistency constraint. That is,
  \begin{equation}\label{equation:cons}
  \sum_{i=1}^k (B + i - 1)p_i = B + \sum_{i=1}^k (i - 1)p_i\le \cons B.
  \end{equation}
  The remainder of this proof consists of computing the left-hand side of the above for our feasible solution explicitly to obtain a lower bound for the consistency $\cons$.
  
  Applying our explicit formulas for $p_i$ for $1\le i\le k-1$ and the fact $p_k = 1 - \sum_{i=1}^{k-1} p_i$, we compute the sum
  \begin{align*}
      \sum_{i=1}^k (i - 1)p_i
      &= \left(\sum_{i=1}^{k-1} (i - 1)p_i\right) + (k-1)\left(1 - \sum_{i=1}^{k-1} p_i\right)  \\
      &= (\rob - 1)\left( B - (B - k + 1)\left(1 + \frac{1}{B-1}\right)^{k-1}\right) \\ & \hspace{9em}+ (k-1)\left(1 - (\rob - 1)\left(\left(1 + \frac{1}{B-1}\right)^{k-1} - 1\right) \right) \\
      &= (k-1)\rob + (\rob - 1)B\left(1 - \left(1 + \frac{1}{B-1}\right)^{k-1}\right).
  \end{align*}
  It follows from \Cref{equation:cons} that $\beta$ is lower bounded by
  \begin{align*}
    \cons
    &\ge 1 + \frac{1}{B} \sum_{i=1}^k (i - 1)p_i \\
    &= 1 + \frac{(k-1)\rob}{B} + (\rob - 1)\left(1 - \left(1 + \frac{1}{B-1}\right)^{k-1}\right).
  \end{align*}
  Now, define
  \[ \Dk\coloneqq k - \frac{\log\left(1 + \frac{1}{\rob - 1}\right)}{\log\left(1 + \frac{1}{B-1}\right)}. \]
  After some further computation, we obtain
  \begin{align*}
    \cons
    &\ge \frac{k\rob}{B} - \rob\left(1 - \frac{1}{B}\right)\left(\left(1 + \frac{1}{B-1}\right)^{\Dk} - 1\right) \\
    &= \frac{\rob}{B}\frac{\log\left(1 + \frac{1}{\rob - 1}\right)}{\log\left(1 + \frac{1}{B-1}\right)} + \rob\left(\frac{\Dk}{B} - \left(1 + \frac{1}{B-1}\right)^{-1}\left(\left(1 + \frac{1}{B-1}\right)^{\Dk} - 1\right)\right).
  \end{align*}
  Lemma~\ref{lemma:tech} lets us bound the terms involving $\Dk$ from below, and it follows that
  \[ \beta
    \ge
    \frac{\rob}{B}\frac{\log\left(1 + \frac{1}{\rob - 1}\right)}{\log\left(1 + \frac{1}{B-1}\right)}. \]
  To finish our proof of the lower bound, notice that $B\log(1 + 1 / (B-1))\to 1$ from below as $B\to\infty$. Hence, the lower bound on $\cons$ approaches $\gamma\log(1 + 1 / (\gamma - 1))$ as $B\to\infty$.
\end{proof}

\section{Proof of \autoref{lem:ubs}}\label{sec:ubs}
Now we present our algorithmic result. Although our analysis deals with the case of $2$ jobs, 
it is  convenient to describe the algorithm in the general case of $n$ jobs.  
The algorithm starts by running round robin for a while, then switches to a greedy strategy of processing jobs in the increasing order of the predicted times. 
If at any point we know $x_i \neq y_i$ for any job $i$, 
we switch to round robin forever. We use $\textsf{OPT}_y = \sum_{i} iy_i$ to denote the $\textsf{OPT}$ under perfect predictions.

\begin{algo}
  \textul{$\texttt{Two-Stage-Schedule}$}$(y_1,y_2,\cdots,y_n)$:\+\\
    At any point, if a job finishes with processing time less or more than its prediction, \+\\
    round robin forever.\-\\
    \textit{Stage} $1$: Round robin for at most $\lambda n \cdot \textsf{OPT}_y /\binom{n}{2}$ units of time.\\
    \textit{Stage} $2$: Process jobs in predicted order\\ 
    \quad \quad \quad\,\,\,(staring from the unfinished job with the least predicted time).
\end{algo}
The intuition behind the algorithm is simple. On one hand,
to ensure robustness, the algorithm switches to round robin when any misprediction is noticed. 
On the other hand, we ask the algorithm to be   $(1+\lambda)$-consistent.   
Suppose $y_1<y_2< \cdots < y_n$. 
If the predictions are perfect, then we expect that a consistent algorithm would produce a schedule that finishes the jobs in the correct order, \ie, job $1$ finishes   first, job $2$    second, and so on. 
In this case, the consistency requirement reduces to
\begin{equation}\label{eq:csr}
    \sum_{i>j} d(i,j) \leq \lambda\,     \textsf{OPT}_y,
\end{equation}
where  and $d(i,j)$ denotes the amount job $i$ delays job $j$ in this scenario.  
Observe that when no job is completed, round robin increases each term in the summation at the same rate of $1/n$.
Thus, stage 1 of the algorithm would make the inequality~\eqref{eq:csr} tight.
Then as we can no longer disobey the predictions in the ideal scenario, we switch to the greedy strategy in the second stage. Next, we analyze the performance of the algorithm in the case of two jobs.

We now prove \autoref{lem:ubs}

\begin{proof}[Proof of \autoref{lem:ubs}]
Let $t= 2y_1 + y_2$. To show consistency, assume $x_1= y_1, x_2= y_2$, so $\textsf{OPT} = t$. In stage $1$, the algorithm runs round robin for $2\lambda t$ units of time. Observe that job $2$ cannot finish before job $1$ in this stage: since $\lambda<1/3$, job $2$ can receive at most $(2y_1+y_2)/3 <y_2$ units of processing time. Consider two cases.
\begin{enumerate}[(i)]
    \item Suppose job $1$ finishes in stage $1$. Then since two jobs share the same rate, 
    \begin{equation}\label{eq:y1s}
        y_1 \leq \lambda t. 
    \end{equation}
    Moreover, in this case. the algorithm runs round robin for $2y_1$ time  and finishes job $2$ in $y_2-y_1$ time.  Thus, $\textsf{ALG} = 3y_1+y_2$, and $\textsf{OPT} = t$. By~\eqref{eq:y1s}, we have $\textsf{ALG} \leq (1+\lambda)\,\textsf{OPT}$.
    \item Suppose job $1$ does not finish in stage $1$. Then both jobs have been processed for $\lambda t $ units of time at the beginning of stage $2$. In stage $2$, the algorithm prioritizes job $1$. Thus,
    \begin{equation}
        \textsf{ALG} = 4\lambda t+ 2(y_1-\lambda t) + (y_2-\lambda t) = (1+\lambda)\,\textsf{OPT}
    \end{equation}
\end{enumerate}
To show robustness, we consider mispredictions, and suppose without loss of generality $y_1 =1$.
Throughout, we let $\epsilon$ to denote an infinitesimal  quantity.
Notice that if any misprediction is found or job $1$ is finished in stage $1$, the algorithm is equivalent of round robin and, therefore, achieves $4/3$ competitive ratio that is better than $1+1/(1+6\lambda)$ for any $\lambda \in (0,1/3)$, so we are done. 
We do a case-by-case analysis, assuming  in stage $1$  no misprediction is detected and both jobs  are finished in stage $2$.
Notice that under the assumptions, $x_1,x_2 \geq \lambda t$, so $\textsf{OPT} \geq 3\lambda t$.
\begin{enumerate}[(i)] 
    \item Suppose  job $1$ finishes no later than its prediction ($x_1 \leq 1$). We have $\textsf{ALG} = \lambda t + 2x_1 +x_2$. \label{it:i}
    \begin{enumerate}[(a)]
        \item If $x_1 < x_2$, then $\textsf{OPT} = 2x_1 + x_2$. Since $\lambda t \leq \textsf{OPT} /3$, we have $\textsf{ALG}/\textsf{OPT} \leq 4/3$.  \label{it:ia} 
        \item If $x_1 \geq  x_2$, then $\textsf{OPT} = 2x_2 + x_1$. Observe that  setting $x_1=y_1= y_2 =1, x_2 = \lambda t + \epsilon$ maximizes the competitive ratio, and this yields a ratio of $1+1/(1+6\lambda)$.   \label{it:ib} 
    \end{enumerate}
    \item Suppose  job $1$ finishes later than its prediction ($x_1 > 1      $). In this case, the stage $2$ starts off by processing job $1$ for $y_1 - \lambda t$ unit of time then switching to round robin.
    \begin{enumerate}[(a)]
        \item If job $1$ finishes no later than job $2$, then we calculate that
        $
        \textsf{ALG} =
        \lambda t +3 x_1 +x_2 -1. 
        $
        If $x_1 < x_2$,   then $\textsf{OPT} = 2x_1 + x_2$, the competitive ratio is at most $4/3$, where the worst case is achieved at $x_1 = 1+\epsilon$ and we use $\lambda t \leq \textsf{OPT}/ 3$. 
        If $x_1 \geq x_2$, then $\textsf{OPT} = 2x_2 + x_1$. The competitive ratio is bounded by $1+1/(1+6\lambda)$, where the worst case is achieved when  $x_1 = 1+\epsilon, x_2 = \lambda t + 2\epsilon, y_2= 1$.
        \item If job $1$ finishes later than  job $2$,  then $\textsf{ALG} = 1 + x_1 + 3x_2 -\lambda t$. Observe that in this case, it is impossible that $x_2 > x_1$, since job $1$ receives more processing than job $2$ throughout the schedule. Assume $x_2 \leq x_1$; then the competitive ratio is bounded by $1+1/(1+6\lambda)$ with the worst case being $x_2=\lambda t + \epsilon,x_1=1$.
        \qedhere
    \end{enumerate}
\end{enumerate}
 \end{proof}
\end{document}